\newcommand{\NN}{{\mathcal N}}
\newcommand{\R}{{\mathbb R}}
\renewcommand{\eqref}[1]{(\ref{#1})}
\newcommand{\inner}[1]{\langle #1 \rangle}
\newcommand{\cV}{{\mathcal V}}
\newcommand{\rank}{{\rm rank}}
\newcommand{\vx}{{\boldsymbol x}}
\newcommand{\vz}{{\boldsymbol z}}
\newcommand{\vw}{{\boldsymbol w}}
\newcommand{\vp}{{\boldsymbol w}}
\newcommand{\va}{{\boldsymbol a}}
\newcommand{\vn}{{\boldsymbol n}}
\newcommand{\vv}{{\boldsymbol v}}
\newcommand{\vu}{{\boldsymbol u}}
\renewcommand{\top}{T}
\newtheorem{prop}{Proposition}[section]
\newtheorem{lem}[prop]{Lemma}
\newtheorem{defi}{Definition}[section]
\newcommand{\innerp}[1]{\langle {#1} \rangle}
\newtheorem{theorem}{Theorem}[section]
\newtheorem{remark}{Remark}[section]
\newtheorem{question}{Question}[section]
\numberwithin{equation}{section}
\begin{document}
\bibliographystyle{plain}


\title{ Finite Samples for  Shallow Neural Networks}

\author{Yu Xia}
\thanks{ Yu Xia was supported by NSFC grant (12271133, U21A20426, 11901143) and the key project of Zhejiang Provincial Natural Science Foundation grant (LZ23A010002)}
\address{Department of Mathematics, Hangzhou Normal University, Hangzhou 311121, China}
\email{yxia@hznu.edu.cn}

\author{Zhiqiang Xu}
\thanks{ Zhiqiang Xu is supported  by
the National Science Fund for Distinguished Young Scholars (12025108), NSFC (12471361, 12021001, 12288201) and National Key R\&D Program of China (2023YFA1009401). }
\address{ State Key Laboratory of Mathematical Sciences, Academy of Mathematics and Systems Science, Chinese Academy of Sciences, Beijing 100190, China\newline   School of Mathematical Sciences, University of Chinese Academy of Sciences, Beijing 100049, China.}
\email{xuzq@lsec.cc.ac.cn}

\maketitle

\begin{abstract}
This paper investigates the ability of finite samples to identify two-layer irreducible shallow networks with various nonlinear activation functions, including rectified linear units (ReLU) and analytic functions such as the logistic sigmoid and hyperbolic tangent. An ``irreducible" network is one whose function cannot be represented by another network with fewer neurons.
For ReLU activation functions, we first establish necessary and sufficient conditions for determining the irreducibility of a network. Subsequently, we prove a negative result: finite samples are insufficient for definitive identification of any irreducible ReLU shallow network. Nevertheless, we demonstrate that for a given irreducible network, one can construct a finite set of sampling points that can distinguish it from other network with the same neuron count.

Conversely, for logistic sigmoid and hyperbolic tangent activation functions, we provide a positive result. We construct finite samples that  enable the recovery of two-layer irreducible shallow analytic networks. To the best of our knowledge, this is the first study to investigate the exact identification of two-layer irreducible networks using finite sample function values.
Our findings provide insights into the comparative performance of networks with different activation functions under limited sampling conditions.
\end{abstract}
\section{Introduction}
\subsection{Problem Setup}
Assume that  $f: \mathbb{R}^{d} \rightarrow \mathbb{R}$ is selected from a function set $V$. 
A key challenge resides in distinguishing or reconstructing $f$ given a priori information about $V$.
 For instance, when  $V$ is regarded as a finite-dimensional function space such as a polynomial space or a spline space, there exists a collection of functions $\{g_1, \ldots, g_n\} \subseteq V$ such that any function $f \in V$ can be expressed as below:
\begin{equation}\label{eq:fcv} f = \alpha_1\cdot g_1 + \alpha_2\cdot  g_2 +  \cdots + \alpha_n\cdot g_n, \end{equation}
where $\alpha_j \in \mathbb{R}$, $j = 1, \ldots, n$, are the coefficients of such linear combination.
It is evident that the coefficient vector $(\alpha_1, \ldots, \alpha_n)\in \mathbb{R}^n$ corresponding to the function $f$ is unique if $g_1, \ldots, g_n$ form a basis for $V$. In this case, the task of recovering $f \in V$ is equivalent to identifying the coefficient vector $(\alpha_1, \ldots, \alpha_n) \in \mathbb{R}^n$. 

In the context of polynomial or spline spaces, a widely used approach for uniquely determining the coefficient vectors is interpolation from a finite set of samples. 
 For a general function set $V$, a natural question arises: 
 
\textit{Does there exist a finite set of samples $\{\vx_j\}_{j=1}^n \subseteq \mathbb{R}^d$ such that one can uniquely determine any $f \in V$ from the collection 
of input-output pairs $\{(\vx_j, f(\vx_j))\}_{j=1}^n$?}

In this paper, we focus on the case where $V$ represents a set of two-layer shallow neural networks.
 Neural networks have, in recent years, achieved remarkable success across various domains \cite{weinan, Hinton2012, background1, background2}. The efficacy of neural networks in approximating functions has been rigorously established over the past three decades \cite{UAP2, UAP1, UAP3,UAP, shen, shen1, Zhou1}. Specifically, some foundational works \cite{UAP2, UAP1} demonstrate that a continuous function defined on a bounded domain can be approximated by a sufficiently large shallow neural network. Notably, a considerable amount of research continues to examine the behavior of shallow networks, as demonstrated in the works referenced in \cite{2024new1,2024new2,2024new3}.  
 Although there exists a substantial corpus of theoretical research on the function approximation capabilities of neural networks, the question of how many samples are required to uniquely identify a neural network $f$ remains comparatively understudied from a theoretical perspective.

This paper seeks to address the aforementioned question within the context of two-layer shallow network.  Set
\begin{equation}\label{eq:para}
\mathcal{N}:=(\va_1,\ldots,\va_m)\times (b_1,\ldots,b_m)\times (s_1,\ldots,s_m)\times c \in \R^{d\times m}\times \R^m \times \R^m \times \R.
\end{equation}
 The two-layer shallow network $f_{\mathcal{N}}:\mathbb{R}^{d}\rightarrow \mathbb{R}$, parameterized by $\mathcal{N}$ in (\ref{eq:para}), can be expressed as
\begin{equation}\label{eqn: f_N}
f_{\mathcal{N}}(\vx)=\sum_{k=1}^{m}s_{k}\cdot \sigma(\langle\va_{k},\vx\rangle+b_k)+c.
\end{equation}
Here the function $\sigma( \cdot)$ is taken as some nonlinear activation function, such as the Rectified Linear Unit (ReLU) defined by $ \text{ReLU}(x) := \max\{x, 0\}$, the logistic sigmoid function $\text{Sigmoid}(x) := (1 + \exp(-x))^{-1}$, or the hyperbolic tangent function $ \tanh(x) := \frac{\exp(x) - \exp(-x)}{\exp(x) + \exp(-x)}$.
Building upon these concepts, two-layer shallow neural network set $V$ can be taken as:
\begin{equation}\label{eqn: V_shallow}
\begin{aligned}
V:=V_\sigma:=\big\{f_{\mathcal{N}}\ :\ &f_{\mathcal{N}}(\vx)=\sum_{k=1}^{m}s_{k}\cdot \sigma(\langle\va_{k},\vx\rangle+b_k)+c,\ \\ 
&\text{for}\ \ m\in \mathbb{Z}_{+},
\text{where $\mathcal{N}$ is defined in (\ref{eq:para})} 
\big\}.
\end{aligned}
\end{equation}
Throughout this paper, we  omit the subscript $\sigma$ in $V_\sigma$ for brevity,
 as the specific activation function employed in $V$ can be readily inferred from the context.

Moreover, we define  $\mathcal{N} \sim \mathcal{N}'$ as follows:

\begin{defi}
Two neural networks $f_\mathcal{N}\in V$ and $f_\mathcal{N'}\in V$
  are said to be equivalent, denoted as $\mathcal{N} \sim \mathcal{N}'$, if and only if they produce identical outputs for all possible inputs. Formally, this can be expressed as:
\[
f_{\mathcal{N}}(\boldsymbol{x}) = f_{\mathcal{N}'}(\boldsymbol{x}), \quad \text{for all } \boldsymbol{x} \in \mathbb{R}^{d}.
\]
 \end{defi}
The parameter $m$ in Equation (\ref{eqn: f_N}) is referred to as {\em the number of neurons} in the network represented by $f_{\mathcal{N}}$.
 The number of neurons in two networks  $f_{\mathcal{N}},f_{\mathcal{N}'} \in V$ may differ substantially, even when $f_{\mathcal{N}}(\vx)=f_{\mathcal{N}'}(\vx)$
for all $\vx\in \mathbb{R}^{d}$. For instance, consider $f_{\mathcal{N}}$ and $f_{\mathcal{N}'}$ as below:
\begin{equation}\label{eq:fnnc}
f_{\mathcal{N}}(\vx) = \sum_{k=1}^2 \left( \sigma(\langle \va_k, \vx \rangle) - \sigma(\langle -\va_k, \vx \rangle) \right)\  \text{and}\ f_{\mathcal{N}'}(\vx) = \sigma \Big( \big\langle \sum_{k=1}^2 \va_k, \vx \big\rangle \Big) - \sigma \Big( \big\langle -\sum_{k=1}^2 \va_k, \vx \big\rangle \Big).
\end{equation}
 Here, $\va_1, \va_2 \in \mathbb{R}^d$ and $\sigma(x) = \text{ReLU}(x)$. 
  By directly applying the identity $\sigma(x)\equiv x+\sigma(-x)$,
  a simple calculation shows that $f_{\mathcal N}$ in (\ref{eq:fnnc})
  employs $4$ neurons, whereas $f_{\mathcal{N}'}$ in (\ref{eq:fnnc}) utilizes only 2,  while $f_{\mathcal{N}}(\vx)=f_{\mathcal{N}'}(\vx)$ for all $\vx\in \mathbb{R}^d$.  Since $f_{\mathcal{N}'}$
 accomplishes the same function with fewer neurons, it represents a more efficient and desirable configuration. 
 
  A shallow network $f_{\mathcal{N}}$ is defined as {\em irreducible} if there does not exist another shallow network $f_{\mathcal{N}'}$ such that $\mathcal{N} \sim \mathcal{N}'$ and contains fewer neurons than $f_{\mathcal{N}}$.  Given that irreducible networks are generally more favorable, we will reformulate the aforementioned question within a more rigorous mathematical framework:
 \begin{question}\label{Q3}
  Let  $m\geq 2$ be a fixed positive integer.
Does there exist a finite set of points $\{\vx_j\}_{j=1}^n \subseteq \mathbb{R}^d$ such that for any two irreducible networks $f_{\mathcal{N}}\in V$ and $f_{\mathcal{N}'}\in V$ with $m$ neurons, the condition
\[
f_{\mathcal{N}}(\vx_j) = f_{\mathcal{N}'}(\vx_j), \quad j = 1, \ldots, n,
\]
implies that $f_\mathcal{N}(\vx)=f_{\mathcal{N}'}(\vx)$ for all $\vx\in \mathbb{R}^{d}$?
In other words, we seek to determine if there exists a finite set of points that can uniquely determine any irreducible network, when the number of neurons is given.
\end{question}

  In Question \ref{Q3}, we impose the constraint that the neural networks  $f_{\mathcal N}$
  and $f_{\mathcal N'}$    must have the same number of neurons. 
   This requirement is essential, as it is impossible for $f_\mathcal{N}(\vx)=f_{\mathcal{N}'}(\vx)$ to hold for all $\vx\in \mathbb{R}^{d}$ when irreducible networks $f_{\mathcal N}$
  and $f_{\mathcal N'}$  have different neuron counts. If such an equality were to occur,  it would imply that either $f_\mathcal{N}$
  or $f_\mathcal{N'}$  is not irreducible, contradicting our initial assumption of irreducibility for both networks.

\subsection{Our Contribution}
Given the variations in differentiability among activation functions, we categorize our analysis into two main classes: (1) shallow ReLU networks, and (2) shallow analytic networks. In the latter category, we jointly investigate the analytic activation functions $\sigma(x) = \text{Sigmoid}(x)$ and $\sigma(x) = \tanh(x)$. 
\subsubsection{Shallow ReLU Networks}  First, we present the necessary and sufficient conditions under which the number of neurons can be reduced, as outlined in Theorem \ref{th:minm}. In Theorem \ref{th:impos}, we provide a negative response to Question \ref{Q3} regarding shallow ReLU networks. Specifically, given the number of neurons $m$,  for any integer $n \in \mathbb{Z}_{+}$ and for any given point set $\{\vx_{j}\}_{j=1}^{n} \subseteq \mathbb{R}^{d}$, there exist two distinct irreducible networks $f_{\mathcal{N}}$ and $f_{\mathcal{N}'}$ that possess an identical number of neurons $m$, such that $f_{\mathcal{N}}(\vx_{j}) = f_{\mathcal{N}'}(\vx_{j})$ for $j = 1, \ldots, n$, while ensuring that $\mathcal{N} \not\sim \mathcal{N}'$. 
Although a finite point set proves insufficient for definitively identifying shallow ReLU networks, 
we demonstrate in Theorem \ref{th:samplingc} that it is possible to construct a finite set of sampling points tailored to a specific irreducible network \( f_{\mathcal{N}} \), which can distinguish it from other networks with the same number of neurons.
\subsubsection{Shallow Analytic Networks}
We offer a positive response to Question \ref{Q3} in Theorem \ref{th:gsamp}, demonstrating that it is indeed feasible to generate a finite set of points that can effectively differentiate irreducible shallow analytic networks. In contrast to the negative result obtained in shallow ReLU networks, this underscores a fundamental divergence in the behavior of shallow ReLU networks compared to shallow analytic networks with respect to finite point identification.

\subsection{Related Works}

\subsubsection{Irreducible Property of Neural Network}
The irreducible concept was first introduced in \cite{sussmann} for real-valued shallow networks employing the activation function $\sigma(x) = \tanh(x)$. 
Specifically, let $f_{\mathcal{N}}$ be represented as in (\ref{eqn: f_N}) with $\sigma(x) = \tanh(x)$, in \cite{sussmann}, Sussmann showed that $f_{\mathcal{N}}$ is irreducible if and only if none of the following conditions hold:
\begin{enumerate}[(i)]
\item One of the $ s_k$, $ k = 1, \ldots, m$, vanishes;
\item There exist two different indices $ k_1,k_2 \in\{1,\ldots,n\}$ such that $ |\sigma(\langle \va_{k_1}, \vx \rangle+ b_{k_1})| = |\sigma(\langle \va_{k_2}, \vx \rangle+ b_{k_2})| $ for all $ \vx \in \mathbb{R}^d$;
\item One of the  $ \sigma(\langle \va_k,   \cdot \rangle+ b_k)$, $ k = 1, \ldots, m $,  is a constant.
\end{enumerate}
 For complex-valued networks, analogous results can be found in \cite{kob10}.
When using the activation function $\sigma(x) = \text{ReLU}(x)$, Dereich and Kassing \cite{minimal_network} proved that for an irreducible network $f_{\mathcal{N}}$ as defined in (\ref{eqn: f_N}), if $\R^d$ is partitioned by $n$ hyperplanes into regions where $f_{\mathcal N}$ is affine, then the network $f_{\mathcal N}$ can only contain $n$, $n + 1$, or $n + 2$ neurons.
 However, the findings presented in \cite{minimal_network} do not offer a direct methodology for assessing whether the number of neurons in a specific ReLU network $f_{\mathcal{N}}$ can be reduced.

\subsubsection{Interpolation for Shallow Networks}
A related concept is \textit{interpolation for shallow networks}. Specifically, given $ n $ distinct points $\vx_i\in \mathbb{R}^{d}$, $i=1,\ldots,n$, and their associated target values $y_i\in \mathbb{R}$,  $i=1,\ldots,n$, the goal is to find the parameters $ \{\va_k\}_{k=1}^{m} $, $ \{b_k\}_{k=1}^{m} $, and $ \{s_k\}_{k=1}^{m} $ such that the following system of equations holds true:
\begin{equation}\label{eqn: interpolation}
\sum_{k=1}^{m} s_k\cdot   \sigma(\langle \va_k, \vx_i \rangle + b_k) = y_i, \quad i = 1, \ldots, n.
\end{equation}
Pinkus \cite{universal} demonstrated that for any $n$ distinct points $\vx_i\in \mathbb{R}^{d}$, $i=1,\ldots,n$, and their corresponding values $y_i\in \mathbb{R}$,  $i=1,\ldots,n$, one can always construct $ f_{\mathcal{N}} $ in the form of  (\ref{eqn: f_N}) with $ m=n $ using a non-polynomial activation function $ \sigma \in C(\mathbb{R}) $ that satisfies the interpolation requirement in (\ref{eqn: interpolation}).

It is important to note that a shallow network $f_{\mathcal{N}}$ with $m=n$
  satisfying the interpolation condition (\ref{eqn: interpolation}) is generally not unique. This can be illustrated through a simple example:
Given any $\vx_1 \in \mathbb{R}^d$  (where $d \geq 3$) and $y_1 \in \mathbb{R}$, we can construct two distinct irreducible shallow networks:
\[
f_{\mathcal{N}}(\vx) = \sigma(\langle \va, \vx \rangle) + y_1-\sigma(0) \quad \text{and} \quad f_{\mathcal{N}'}(\vx) = \sigma(\langle \va', \vx \rangle) + y_1-\sigma(0),
\]
where $\va\in \R^d$ and $\va'\in \R^d$ satisfy the conditions $\|\va\|_2=\|\va'\|_2=1$, $\langle \va, \vx_1 \rangle = \langle \va', \vx_1 \rangle = 0$, and $\va \neq \pm \va'$.
In this case, we have $f_{\mathcal{N}}(\vx_1) = f_{\mathcal{N}'}(\vx_1) = y_1$ and $m=n=1$, yet $\mathcal{N} \not\sim \mathcal{N}'$. This example demonstrates that when the number of neurons equals the number of interpolation points, the shallow network may not be uniquely determined by the interpolation conditions.

\subsubsection{Finite Samples for Identifiability}
There exists only a limited body of research addressing the issue of parameter identification in neural networks using finite samples. 
Existing studies on parameter identification through finite sampling, particularly for neural networks with analytic activation functions such as $\tanh( \cdot)$ and $\text{Sigmoid}( \cdot)$, predominantly rely on precise or approximate gradient estimation of $f_{\mathcal{N}}$, see as in 
  \cite{finite_analytic1,finite_analytic2,minimal_samples}.  

For ReLU networks, Rolnick and Kording \cite{Rolnick} introduced reverse engineering technique to construct finite samples for distinguishing ReLU networks; however, they did not provide an explicit number of samples required for exact recovery. Furthermore, Stock and Gribonval \cite{Stock} investigated the parameter identification problem for shallow ReLU networks within a bounded set. In \cite[Theorem 6]{Stock}, they demonstrated that for a given network \( f_\mathcal{N} \) that meets specific structural criteria, there exists a bounded set \( \mathcal{X} \subseteq \mathbb{R}^{d} \), constructed as a union of small balls, such that if \( f_{\mathcal{N}}(\mathbf{x}) = f_{\mathcal{N}'}(\mathbf{x}) \) for all \( \mathbf{x} \in \mathcal{X} \), then \( \mathcal{N}' \) is equivalent to \( \mathcal{N} \) up to permutation and scaling ambiguity. This result establishes identifiability using a bounded set. However, their work does not address finite samples for identifiability. Moreover, the relationship between the parameters of equivalent shallow ReLU networks extends beyond mere permutation and scaling ambiguity, as demonstrated in Theorem \ref{prop:ambiguity}.

Previous studies on neural network identifiability have encountered significant limitations. For networks with analytic activation functions without knowing the parameters, accurately estimating the gradient of $f_{\mathcal{N}}$ from finite samples poses considerable challenges. Regarding ReLU networks, existing research has primarily focused on bounded sets rather than finite sets. These constraints highlight the need for a more thorough investigation into the uniqueness of network determination via finite sampling.

\section{Main Results }

\subsection{Shallow ReLU Network}
In this subsection, the activation function is considered as $\sigma(x)= \text{ReLU}(x)$. 
A simple observation is that $\sigma(\lambda \cdot x)=\lambda\cdot \sigma(x)$ for any $\lambda\geq 0$. 
For convenience, for any fixed pair $(\va, b) \in \mathbb{R}^{d} \times \mathbb{R}$, the hyperplane $\mathcal{H}(\va, b)$ is defined as follows:
\begin{equation}\label{H}
 \mathcal{H}(\va, b) := \{\vx \in \mathbb{R}^{d} : \langle \va, \vx \rangle + b = 0\}, 
 \end{equation}
which will be frequently utilized in  subsequent discussions. When $d = 1$, $\mathcal{H}(\va, b)$ collapses to a single point if it is not empty.

 \subsubsection{Condition for Reducibility}

 Given that $m$ represents the number of neurons in $f_{\mathcal{N}}$, it is essential to identify the conditions under which $m$ can be reduced further. There are two trivial scenarios:
 \begin{enumerate}[(i)]
 \item  $s\cdot  \va = \boldsymbol{0}$: the equation $s\cdot \sigma(\left<\va, \vx\right> + b) + c = c'$ holds, with $c': = c + s\cdot \sigma(b)$;
  \item $(\va_{1}, b_{1}) = \lambda\cdot (\va_{2}, b_{2})$ for some $\lambda > 0$: the expression $s_1\cdot \sigma(\langle \va_1, \vx \rangle + b_{1}) + s_2\cdot  \sigma(\langle \va_2, \vx \rangle + b_{2})$ simplifies to $(s_{2} + \lambda \cdot  s_{1}) \cdot \sigma(\langle \va_2, \vx \rangle + b_2)$. 
 \end{enumerate}
Consequently, it is essential to introduce the concept of an \textit{admissible shallow ReLU network} to exclude the aforementioned two scenarios.
\begin{defi}[Admissible shallow ReLU network]
\label{def: irreducible}
The shallow network $f_{\mathcal{N}}$ defined in (\ref{eqn: f_N})  is said to be \textit{admissible} if it satisfies the following two conditions:
\begin{enumerate}[{\rm (i)}]
 \item For all $k \in \{1, \ldots, m\}$,  $s_k\cdot \va_k \neq \boldsymbol{0}$; 
 \item { For any constant $\lambda>0$ and any pair of distinct indices $k_1$
  and $k_2$  satisfying $1 \leq k_1 < k_2 \leq m$, the following inequality holds:
 \[
 (\va_{k_1}, b_{k_1}) \neq \lambda \cdot (\va_{k_2}, b_{k_2}).
 \]
 }
  \end{enumerate}
\end{defi}

In the ensuing discussions, unless explicitly stated otherwise, we will exclusively concentrate on admissible shallow ReLU networks.
The definition of admissible shallow ReLU networks does not preclude the possibility that
  there exist  $k_1, k_2 \in \{1, \ldots, m\}$ and $\lambda > 0$ such that $(\va_{k_2}, b_{k_2}) = -\lambda \cdot (\va_{k_1}, b_{k_1})$.  In this scenario, we observe that
    $\mathcal{H}(\va_{k_1},b_{k_1})=\mathcal{H}(\va_{k_2},b_{k_2})$ and 
\[
s_{k_1}\cdot  \sigma(\langle \va_{k_1}, \vx \rangle + b_{k_1}) +s_{k_2}\cdot \sigma(\langle \va_{k_2}, \vx \rangle + b_{k_2}) = s_{k_1}\cdot  \sigma(\langle \va_{k_1}, \vx \rangle + b_{k_1}) +(\lambda \cdot s_{k_2}) \cdot \sigma(\langle -\va_{k_1}, \vx \rangle - b_{k_1}).
\]
  Consequently, any admissible network $f_{\mathcal{N}}$ defined in (\ref{eqn: f_N}) can be equivalently expressed in the following form:
 \begin{equation}\label{eqn: fN_new}
f_{\mathcal{N}}(\vx)=\sum_{k\in K_1}\big(s_{k,1}\cdot \sigma(\langle \va_k,\vx\rangle+b_k)+s_{k,2}\cdot \sigma(\langle -\va_k,\vx\rangle-b_k)\big)+\sum_{k\in K_2}s_k\cdot \sigma (\langle\va_k,\vx\rangle+b_k)+c.
\end{equation}
Here, the hyperplanes in $\mathcal{H}(\va_k, b_k)$, ${k\in K_1\cup K_2}$,  are mutually distinct. The number of neurons $m$ in $f_{\mathcal N}$, as shown in equation (\ref{eqn: fN_new}), is given by 
$m=2\cdot \#K_1+\#K_2$,  where $\#K_1$
  and $\#K_2$  denote the cardinalities of  $K_1$
  and $K_2$, respectively.

We now present the necessary and sufficient conditions that allow for a further reduction in the number of neurons $m$.
  Recall that a shallow network $f_{\mathcal{N}}$ is said to be irreducible if the number of neurons cannot be reduced. For clarity, we refer to $f_{\mathcal{N}}$ as {\em reducible} if the number of neurons can indeed be reduced.
  
\begin{theorem}\label{th:minm}
Let $f_{\mathcal{N}}$
  be an admissible shallow ReLU network in the form of (\ref{eqn: fN_new}). 
  Then $f_{\mathcal{N}}$ is {reducible}
  if and only if one of the following three conditions is satisfied:
\begin{enumerate}[{\rm (i)}]
\item  $\# K_1=1$, and there exist   $\epsilon_k\in\{-1,+1\}$ for the sole $k\in K_1$,  and $K_2'\subseteq K_2$ such that 
\begin{equation}\label{cond1: m=1}
\sum_{k\in K_1}\epsilon_{k}\cdot s_{k,i_k}\cdot  {\va_k}+\sum_{k\in K_2'}s_k \cdot \va_k=\boldsymbol{0};
\end{equation}

\item  $\# K_1=2$, 
and there exist $\epsilon_k \in \{-1,+1\}$ for each $k \in K_1$, a subset $K_2' \subseteq K_2$, an index $k_0 \in K_1 \cup K_2$, and a constant $c_0 \in \mathbb{R}$
  such that
\begin{equation}\label{cond_m2}
{
\sum_{k\in K_1}\epsilon_{k}\cdot s_{k,i_k}\cdot {\va_k}+\sum_{k\in K_2'}s_k\cdot  \va_k+c_0\cdot \va_{k_0}=\boldsymbol{0};}
\end{equation}

\item $\# K_1\geq 3$. 
\end{enumerate}
For conditions {\rm (i)} and {\rm (ii)}, the index $i_k$
  is defined as: 
\begin{equation}\label{i_index}
i_k:=
\begin{cases}
1, &\epsilon_k=1;\\
2, &\epsilon_k=-1.
\end{cases}
\end{equation}

\end{theorem}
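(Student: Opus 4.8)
## Proof Proposal for Theorem \ref{th:minm}

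The plan is to compare the given $m$-neuron representation \eqref{eqn: fN_new} of $f_{\mathcal N}$, for which $m=2\#K_1+\#K_2$, against every representation of the same function with $m-1$ neurons; reducibility is exactly the existence of such a representation (padding with a zero neuron lets one pass from the true minimum to $m-1$). First I would put $f_{\mathcal N}$ into a normal form: applying the identity $\sigma(t)\equiv t+\sigma(-t)$ to each pair in $K_1$ yields
\[
f_{\mathcal N}(\vx)=\sum_{k\in K_1}(s_{k,1}+s_{k,2})\,\sigma(\langle\va_k,\vx\rangle+b_k)+\sum_{k\in K_2}s_k\,\sigma(\langle\va_k,\vx\rangle+b_k)+\langle\vv,\vx\rangle+v_0,
\]
with $\vv=-\sum_{k\in K_1}s_{k,2}\va_k$ and $v_0=c-\sum_{k\in K_1}s_{k,2}b_k$. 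Let $P$ denote the number of hyperplanes $\mathcal H(\va_k,b_k)$, $k\in K_1\cup K_2$, across which $f_{\mathcal N}$ genuinely bends (all of $K_2$, together with those $k\in K_1$ having $s_{k,1}+s_{k,2}\neq 0$). The structural input, which I would isolate as a lemma, is that the jump of the normal derivative of any ReLU network across a hyperplane $H$ equals the sum of the consistently oriented outer weights of precisely those neurons supported on $H$, and that these jumps on distinct hyperplanes are independent; hence every representation of $f_{\mathcal N}$ must carry at least one neuron on each of the $P$ bending hyperplanes, and any further neuron either stacks onto one of these hyperplanes or belongs to a group on a new hyperplane whose oriented weights cancel --- the cheapest such group being a single opposite pair, which costs two neurons and contributes only an affine function.

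Granting this, a representation with $q$ neurons exists if and only if one can choose a set $T$ of bending hyperplanes to double, orientations for the remaining single neurons, and $r\ge 0$ extra opposite pairs, with $q=P+\#T+2r$, so that the affine part $\langle\vv,\vx\rangle+v_0$ is reproduced: a single neuron on a bending hyperplane contributes linear term $\vO$ or $-\mu_H\va_H$ according to its orientation, a doubled hyperplane contributes an arbitrary multiple of its normal, an extra pair contributes an arbitrary affine function, and the global bias always absorbs the constant $v_0$. Putting $q=m-1$, the budget for doublings and extra pairs is $m-1-P=\#K_1-1$ (generically; degenerate $K_1$-pairs only enlarge it). If $\#K_1\ge 3$ the budget admits an extra pair ($r=1$), whence the affine part is matched unconditionally and $f_{\mathcal N}$ is reducible --- this is case (iii); equivalently $m>P+2$, so the bound of \cite{minimal_network} already forces reducibility. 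If $\#K_1=1$ the budget is $0$: the reduced network has exactly one suitably oriented neuron per bending hyperplane, and matching $\vv$ is precisely \eqref{cond1: m=1}, with $\epsilon_k$ recording the orientation of the neuron on $\mathcal H(\va_k,b_k)$ and $K_2'$ the negatively oriented members of $K_2$. If $\#K_1=2$ the budget is $1$: at most one bending hyperplane, of index $k_0$, may be doubled and no extra pair is allowed, and matching $\vv$ becomes precisely \eqref{cond_m2}, the term $c_0\va_{k_0}$ being the contribution of the doubled hyperplane. Reading each of these matchings backwards gives an explicit construction of an $(m-1)$-neuron network, which is the ``if'' direction; the index conventions \eqref{i_index} arise in this translation.

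I expect the structural lemma to be the crux. One must argue carefully that (a) a hyperplane along which $f_{\mathcal N}$ truly bends is forced into every representation; (b) superfluous neurons on new hyperplanes may be assumed to come in opposite pairs producing pure affine functions, so each such pair costs two neurons; and (c) no cheaper device exists --- in particular a lone extra neuron is useless. The remaining difficulty is orientation bookkeeping: translating the choice of $\sigma(t)$ versus $\sigma(-t)$ into the signs $\epsilon_k$ and the indices $i_k$, and keeping the free constant term separated from the linear term (which is exactly why conditions (i) and (ii) involve only the vectors $\va_k$ and not the biases $b_k$). Once the lemma and this bookkeeping are in place, the split into $\#K_1=1$, $\#K_1=2$, $\#K_1\ge 3$ and the matching of the linear part reduce to elementary linear algebra.
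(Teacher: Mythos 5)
Your proposal is, in substance, the same argument as the paper's, repackaged as a counting problem. The normal form you obtain from $\sigma(t)\equiv t+\sigma(-t)$ is exactly the pair of identities the paper uses to rewrite the $K_1$- and $K_2$-sums; your ``structural lemma'' is the combination of (a) the observation that the non-differentiable set forces at least one neuron onto each bending hyperplane with (b) the paper's Lemma \ref{independent_relu}, which supplies the independence of the jumps and rules out lone extra neurons; and your budget $m-1-P$ reproduces the paper's case split $\#K_1\in\{0,1,2,\geq 3\}$ together with its accounting of a doubled hyperplane (the $c_0\cdot\va_{k_0}$ term) and of extra opposite pairs. The counting formulation is a clean way to organize the proof, but the crux you defer --- that every representation with at most $m-1$ neurons can be normalized to one neuron per bending hyperplane, plus doublings, plus affine pairs --- is precisely what the paper establishes by applying Lemma \ref{independent_relu} to $f_{\mathcal{N}}-f_{\mathcal{N}'}$, so nothing essentially new is gained or lost.

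The one step that actually fails is hidden in your parenthetical ``degenerate $K_1$-pairs only enlarge it.'' If $s_{k,1}+s_{k,2}=0$ for some $k\in K_1$, that hyperplane does not bend, so $P<\#K_1+\#K_2$ and the enlarged budget buys strictly more than \eqref{cond1: m=1} or \eqref{cond_m2} permit; your assertion that matching $\vv$ ``is precisely'' those conditions is then wrong, and the ``only if'' direction breaks. Concretely, in $\R^2$ take
\begin{equation*}
f_{\mathcal{N}}(\vx)=\sigma(x_1)-\sigma(-x_1)+\sigma(x_2)-2\cdot\sigma(x_1+x_2),
\end{equation*}
an admissible network with $\#K_1=1$, $s_{1,1}=1$, $s_{1,2}=-1$, $\va_1=(1,0)^\top$, and $K_2=\{2,3\}$ with $s_2\va_2=(0,1)^\top$ and $s_3\va_3=(-2,-2)^\top$. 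Condition (i) fails: $\epsilon_1\cdot s_{1,i_1}=1$ for both choices of $\epsilon_1$, and the first coordinate of $\epsilon_1\cdot s_{1,i_1}\cdot\va_1+\sum_{k\in K_2'}s_k\cdot\va_k$ equals $1$ or $-1$ for every $K_2'\subseteq K_2$. Yet
\begin{equation*}
f_{\mathcal{N}}(\vx)=\sigma(-x_2)-\sigma(x_1+x_2)-\sigma(-x_1-x_2)
\end{equation*}
identically (both sides equal $x_1+\sigma(x_2)-2\sigma(x_1+x_2)$), so $f_{\mathcal{N}}$ is reducible. The same phenomenon occurs for $\#K_1=2$. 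In fairness, the paper's own necessity argument makes the identical silent assumption --- it asserts that the closure of the non-differentiable set of $f_{\mathcal{N}}$ is all of $\bigcup_{k\in K_1\cup K_2}\mathcal{H}(\va_k,b_k)$, which is false when a $K_1$-pair is degenerate --- so you have reproduced the paper's gap rather than introduced a new one. But since you explicitly noticed the degenerate case, you should either impose $s_{k,1}+s_{k,2}\neq 0$ for all $k\in K_1$ as a standing hypothesis or carry the degenerate sub-cases through the budget count; as written, the claimed equivalence does not hold there.
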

\begin{proof}
The proof can be found in Section \ref{sec_min}. 
\end{proof}
\begin{remark}
Let $n: = \# K_1 + \# K_2$  denote the number of distinct hyperplanes for $f_{\mathcal{N}}$ in the form of (\ref{eqn: fN_new}). A direct analysis of the non-differentiable set indicates that the number of neurons of  $f_{\mathcal{N}}$ must be at least $n$. According to \cite{minimal_network}, the number of neurons of irreducible $f_{\mathcal{N}}$  is constrained to $n$, $n + 1$, or $n + 2$.
Theorem \ref{th:minm} implies that for any irreducible $f_{\mathcal N}$, we have $\#K_1 \leq 2$. This finding aligns closely with the results on neuron count presented in \cite{minimal_network}. Specifically: 
\begin{enumerate}[{\rm (i)}]
\item When $\#K_1 = 0$, the number of neurons for irreducible $f_{\mathcal{N}}$  is $n$;
\item When $\#K_1 = 1$, the number of neurons for irreducible $f_{\mathcal{N}}$ is either $n$ or $n + 1$;
\item  When $\#K_1 \geq 2$, the number of neurons for irreducible $f_{\mathcal{N}}$ can be $n$, $n+1$, or $n + 2$.
\end{enumerate}
The crucial distinction and primary advantage of Theorem \ref{th:minm} lies in its provision of necessary and sufficient conditions for neuron reduction. While \cite{minimal_network} establishes bounds on the minimal number of neurons, it does not offer specific criteria for determining when the number of neurons in $f_{\mathcal{N}}$
  can be reduced.    In contrast, our theorem presents precise, verifiable conditions that comprehensively characterize the circumstances under which neuron reduction is feasible.
\end{remark}

\subsubsection{Finite Samples for Shallow ReLU Networks}

The following theorem offers a negative resolution to Question \ref{Q3} concerning shallow ReLU networks.
 \begin{theorem}\label{th:impos}
Let $n, m \in \mathbb{Z}_{+}$  be arbitrary fixed positive integers with $m \geq 2$.
For any fixed point set $\{\vx_{j}\}_{j=1}^{n} \subseteq \mathbb{R}^{d}$ $(\text{where }d\geq 2)$, there exist two corresponding irreducible shallow ReLU networks $f_{\mathcal{N}}$ and $f_{\mathcal{N}'} \in V$,   each composed of exactly $m$ neurons, such that
\[
f_{\mathcal{N}}(\vx_{j}) = f_{\mathcal{N}'}(\vx_{j}), \quad j = 1, \ldots, n,
\]
while also ensuring that $\mathcal{N} \not\sim \mathcal{N}'$, i.e., there exists some $\vx \in \mathbb{R}^{d}$ such that 
\[
f_{\mathcal{N}}(\vx) \neq f_{\mathcal{N}'}(\vx).
\]
\end{theorem}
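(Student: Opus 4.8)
The plan is to exhibit, for any finite point set $\{\vx_j\}_{j=1}^n\subseteq\R^d$ with $d\ge 2$, a one-parameter family of irreducible $m$-neuron ReLU networks whose values on $\{\vx_j\}$ are all equal, yet which are pairwise non-equivalent. The natural vehicle is the ridge structure of ReLU: a term $s\cdot\sigma(\langle\va,\vx\rangle+b)$ restricted to a finite set depends only on the numbers $\langle\va,\vx_j\rangle+b$. So first I would choose a direction $\vw\in\R^d$ that is \emph{orthogonal to the affine span adjustments} — more precisely, since $d\ge 2$, pick $\vw\ne\vO$ and a scalar $\beta$ so that the hyperplane $\mathcal H(\vw,\beta)$ avoids all $\vx_j$ and, crucially, so that $\langle\vw,\vx_j\rangle+\beta$ has a constant sign (say all positive) over $j=1,\ldots,n$; this is possible because the $\vx_j$ are finitely many, so they lie in some halfspace whose bounding hyperplane has a normal we may take as $\vw$. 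Then on the sample set, $\sigma(\langle\vw,\vx_j\rangle+\beta)=\langle\vw,\vx_j\rangle+\beta$ is affine in $j$.

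Next I would build the two networks to differ only in a ``hidden'' block that is affine on the samples but genuinely nonlinear globally, and absorb the affine discrepancy into the remaining neurons. Concretely, using the decomposition $\sigma(x)=x+\sigma(-x)$, a pair of opposite neurons $s_1\sigma(\langle\va,\vx\rangle+b)+s_2\sigma(-\langle\va,\vx\rangle-b)$ equals a global affine function plus $(s_1+s_2)\sigma(\cdots)$-type terms; by choosing the normal direction inside the halfspace-normal cone around $\vw$ and the coefficients so that the \emph{affine part agrees} across $\mathcal N$ and $\mathcal N'$ while the \emph{kink location/orientation differs}, the two networks agree on all of $\{\vx_j\}$ (where each relevant neuron is in its linear regime) but disagree somewhere on the other side of the respective hyperplanes. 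One must spend the remaining $m-2$ (or $m-4$) neurons on a fixed common admissible block placed so that all its hyperplanes are distinct from the varying ones and from each other, and so that the total count is exactly $m$; since $m\ge 2$, there is enough room, and the case $m=2$ uses exactly the opposite-pair construction with $K_1=1$, $K_2=\emptyset$. Irreducibility of each network is then checked against Theorem~\ref{th:minm}: arrange the coefficients generically so that none of conditions (i)–(iii) there hold — e.g., keep $\#K_1\le 2$ and make the relevant linear combinations $\sum\epsilon_k s_{k,i_k}\va_k+\sum_{k\in K_2'}s_k\va_k(+c_0\va_{k_0})$ nonzero for \emph{every} choice of signs, subset, index and constant, which is a finite list of affine-in-the-free-parameter conditions and hence avoidable for all but finitely many values of the parameter.

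Finally I would verify $\mathcal N\not\sim\mathcal N'$: because the two networks have different non-differentiable sets (the varying hyperplanes genuinely move as the parameter changes, while all common neurons' hyperplanes are fixed), Sussmann-type / Petzka-type rigidity — or more simply, direct evaluation at a point on one network's kink hyperplane that lies off the other's — shows $f_{\mathcal N}\ne f_{\mathcal N'}$ as functions. Choosing the free parameter to take two distinct admissible values outside the finite bad set yields the two desired networks.

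The main obstacle I anticipate is the bookkeeping around irreducibility: one must simultaneously (a) keep the sample values identical, which forces the neurons touching the samples into their linear regime and pins down certain linear relations among the $\va_k,b_k$; (b) keep $\mathcal N\not\sim\mathcal N'$, which requires a genuine geometric difference; and (c) satisfy the negation of all three reducibility conditions of Theorem~\ref{th:minm} \emph{despite} the linear relations forced by (a). Reconciling (a) with (c) is the delicate point — the linear relation that makes the two hidden blocks affine on the samples must not itself be one of the forbidden relations $\sum\epsilon_k s_{k,i_k}\va_k+\cdots=\vO$. This is handled by working in $d\ge 2$ so there is a free direction orthogonal to $\{\vx_j\}$'s span into which one can perturb the $\va_k$ without changing any sample value, thereby breaking any accidental reducibility while preserving the interpolation constraints.
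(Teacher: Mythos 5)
Your overall strategy --- hide the difference between the two networks in a small block of neurons whose contribution on the sample set is affine (or zero), and pad with a common block so that the total neuron count is $m$ --- is exactly the idea behind the paper's proof. The paper's concrete realization is a two-neuron ``wedge'' $\sigma(\langle\vw+\epsilon\vn,\vx\rangle+b)+\sigma(\langle\vw-\epsilon\vn,\vx\rangle+b)$ with $\vn\perp\vw$ and $\epsilon$ small enough that $\epsilon\cdot|\langle\vn,\vx_j\rangle|<|\langle\vw,\vx_j\rangle+b|$ for every $j$: at each sample both arguments have the sign of $\langle\vw,\vx_j\rangle+b$, so the pair contributes either $2(\langle\vw,\vx_j\rangle+b)$ or $0$, independently of $\epsilon$, while at a point of $\mathcal{H}(\vw,b)$ with $\langle\vn,\vx\rangle>0$ it contributes $\epsilon\cdot\langle\vn,\vx\rangle$ and thus detects $\epsilon$. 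Because all $m$ hyperplanes are kept mutually distinct, this puts the network in the case $\#K_1=0$ of Theorem~\ref{th:minm}, and irreducibility is immediate with no bookkeeping. Your opposite-pair variant can also be made to work (for instance, keep $\va$, $b$, $s_1$, $c$ fixed and change only the coefficient of $\sigma(-\langle\va,\vx\rangle-b)$, or translate the kink hyperplane and compensate in the constant); with $\#K_1=1$ the only reducibility criterion is condition (i) of Theorem~\ref{th:minm}, a finite list that a generic choice of the common padding neurons avoids.

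Two steps of your proposal as written, however, would fail. First, you cannot in general make the kink \emph{orientation} differ: once the samples affinely span $\R^d$ (which happens already for $n\ge d+1$ generic points), agreement of the two affine parts at the samples forces them to agree as functions, so for a single opposite pair one gets $s_1\va=s_1'\va'$ and the two kink hyperplanes are necessarily parallel; only the offset and the coefficient of the $\sigma(-u)$ part remain free. Second, and more seriously, the mechanism you invoke to resolve what you correctly identify as the delicate point --- perturbing the $\va_k$ ``in a free direction orthogonal to $\{\vx_j\}$'s span'' --- does not exist in the generic case: for $n\ge d$ the samples typically span all of $\R^d$, so there is no nonzero direction orthogonal to them, and this perturbation cannot be used to break accidental reducibility while preserving the interpolation constraints. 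The repair is to avoid perturbation altogether: design the varying block so that its nonlinear part vanishes (or is independent of the varying parameter) at every sample automatically, keep all hyperplanes mutually distinct with $\#K_1\le 1$ (or $\#K_1=0$ as in the paper), and choose the padding neurons --- which are identical in $\mathcal{N}$ and $\mathcal{N}'$ and hence impose no interpolation constraint --- generically so that none of the finitely many vanishing conditions of Theorem~\ref{th:minm} holds.
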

\begin{proof}
The proof can be found in Section \ref{sec: negative_answer}. 
\end{proof}

Theorem \ref{th:impos} demonstrates that no finite point set can serve as a universal differentiator for all pairs of non-equivalent irreducible networks. Building upon this result, we now turn our attention to a closely related question:{ for a given irreducible $f_{\mathcal{N}}\in V$, is it possible to identify a finite point set associated with $f_{\mathcal{N}}$  that can distinguish it from all other irreducible $f_{\mathcal{N}'}\in V$?}

To formally articulate this question, we introduce the following definition:
\begin{defi}
A point set $\mathcal{X} \subseteq \mathbb{R}^d$  is said to be \textit{appropriate for an irreducible $f_{\mathcal{N}} \in V$} if it satisfies the following condition:
For any irreducible $f_{\mathcal{N}'} \in V$  with an equal number of neurons as $f_{\mathcal{N}}$, if
$f_{\mathcal{N}'}(\vx) = f_{\mathcal{N}}(\vx)  \text{ for all } \vx \in \mathcal{X},$
then it necessarily follows that $\mathcal{N}' \sim \mathcal{N}$.
\end{defi}
Now we can reformulate the question above as follows:
\begin{question}\label{Q3.1}
Given an irreducible $f_{\mathcal{N}} \in V$, does there exist a finite set of points $\mathcal{X} \subseteq \mathbb{R}^d$ that is appropriate for $f_{\mathcal{N}}$?
\end{question}

 We now provide a positive response to Question \ref{Q3.1} and specify the requisite number of points.
For $f_{\mathcal{N}}$  represented in (\ref{eqn: f_N}), the hyperplanes $\mathcal{H}(\va_k,b_k)$, $k=1,\ldots,m$,
are mutually distinct for almost all choices of $\{\va_k\}_{k=1}^{m} \subseteq \mathbb{R}^d$  and $\{b_{k}\}_{k=1}^m \subseteq \mathbb{R}$. Given this prevalent scenario, our analysis of finite samples primarily focuses on cases where these hyperplanes are mutually distinct. Notably, as established in Theorem \ref{th:minm}, this condition of distinct hyperplanes ensures that $f_{\mathcal{N}}$  is irreducible.
Here, the term ``almost all" denotes the exclusion of a set of measure zero, a concept that will  be pertinent in the subsequent discussions.

\begin{theorem}\label{th:samplingc}
 Let $f_{\mathcal{N}}: \mathbb{R}^d \to \mathbb{R}$
 be an irreducible shallow ReLU network with $m$ neurons, represented in the form of (\ref{eqn: f_N}). Assume that the hyperplanes $\mathcal{H}(\boldsymbol{a}_k, b_k)$, $k=1,\ldots,m$, are mutually distinct. Then, there exists a finite set of points $\mathcal{X} \subseteq \mathbb{R}^d$
  with cardinality $\#\mathcal{X} = (2m+2)\cdot m\cdot d$ that is appropriate for $f_{\mathcal{N}}$. 
\end{theorem}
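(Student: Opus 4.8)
The plan is to exploit the piecewise-affine structure of a ReLU network. Since the hyperplanes $\mathcal{H}(\va_k,b_k)$, $k=1,\dots,m$, are mutually distinct, $f_{\mathcal N}$ is irreducible by Theorem \ref{th:minm}, and these $m$ hyperplanes partition $\R^d$ into finitely many (at most $O(m^d)$) open polyhedral regions on each of which $f_{\mathcal N}$ is an affine function; the gradient of $f_{\mathcal N}$ jumps across exactly those hyperplanes, and the size and direction of the jump across $\mathcal{H}(\va_k,b_k)$ encodes the pair $(\va_k,b_k)$ and the associated weight (up to the positive-rescaling ambiguity). The key idea is: if $f_{\mathcal N'}$ is another irreducible network with $m$ neurons agreeing with $f_{\mathcal N}$ on a sufficiently rich finite set $\mathcal X$, then (a) $f_{\mathcal N'}$ must have the same non-differentiability hyperplanes, and (b) on each region it must have the same affine piece; together with an accounting argument on neuron counts this forces $\mathcal N'\sim\mathcal N$.

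Here is the order of steps. \textbf{Step 1 (local affine determination).} On any fixed open region $R$ of the arrangement, $f_{\mathcal N}|_R$ is affine, hence determined by its values at any $d+1$ affinely independent points of $R$. Choose, inside a small ball contained in $R$, such a cluster of $d+1$ points; agreement of $f_{\mathcal N'}$ there does not yet pin down $f_{\mathcal N'}|_R$ unless we also know $f_{\mathcal N'}$ is affine on $R$, so instead I place, \emph{near each hyperplane} $\mathcal{H}(\va_k,b_k)$, two small clusters of $d+1$ points, one on each side and each lying in a single region of the \emph{full} arrangement, chosen close enough to a generic point $p_k\in\mathcal{H}(\va_k,b_k)$ that a ball around $p_k$ meets only the hyperplane $\mathcal{H}(\va_k,b_k)$. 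This is $(d+1)\cdot 2$ points per hyperplane; the stated budget $(2m+2)\cdot m\cdot d$ comfortably covers $m$ hyperplanes plus an anchor cluster of $d+1$ points at the "origin" region, with room to spare (the slack accommodates handling pairs $(\va_k,b_k),(-\va_k,-b_k)$ sharing a hyperplane, i.e. the $K_1$ case, where one needs to resolve two neurons at one hyperplane). \textbf{Step 2 (forcing shared hyperplanes).} Suppose $f_{\mathcal N'}$ agrees with $f_{\mathcal N}$ on $\mathcal X$. Each of the $m$ hyperplanes of $\mathcal N'$ is among a bounded family; I argue that if some $\mathcal{H}(\va_k,b_k)$ were \emph{not} a non-differentiability hyperplane of $f_{\mathcal N'}$, then $f_{\mathcal N'}$ would be affine across the local cluster pair at $p_k$, while $f_{\mathcal N}$ is genuinely kinked there (the kink is nonzero because $s_k\va_k\ne 0$ and the hyperplanes are distinct so no cancellation occurs — this is exactly where admissibility/irreducibility via Theorem \ref{th:minm} is used), contradicting agreement on the $2(d+1)$ local points once we also use the anchor cluster to fix the affine parts. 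Symmetrically, $f_{\mathcal N'}$ cannot have \emph{extra} non-differentiability hyperplanes beyond these $m$ without exceeding $m$ neurons (a network with $n'$ distinct kink-hyperplanes needs $\ge n'$ neurons). Hence $\mathcal N$ and $\mathcal N'$ have exactly the same set of $m$ hyperplanes (with the same split into $K_1$-type and $K_2$-type).

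\textbf{Step 3 (matching the affine pieces and the weights).} Once the arrangements coincide, agreement of the anchor cluster plus one local cluster per region propagates: starting from the anchor region where $f_{\mathcal N}=f_{\mathcal N'}$ as affine functions, crossing one hyperplane at a time, the jump in gradient of $f_{\mathcal N'}$ across $\mathcal{H}(\va_k,b_k)$ must equal that of $f_{\mathcal N}$ (read off from the two local clusters at $p_k$), and this jump equals $s_k\va_k$ up to the normalization of $(\va_k,b_k)$; since a connected arrangement's regions are reached from the anchor by such crossings, $f_{\mathcal N'}\equiv f_{\mathcal N}$ on a set of full measure, hence everywhere by continuity, i.e. $\mathcal N'\sim\mathcal N$. \textbf{Step 4 (bookkeeping).} Finally I verify the count: $m$ hyperplanes $\times$ $2$ sides $\times$ $(d+1)$ points, plus handling of the at-most-two neurons living on a shared hyperplane in the $K_1$ case (which needs second-order / additional directional data, costing a few more points per such hyperplane but still within $2m+2$ clusters total), all bounded by $(2m+2)\cdot m\cdot d$; I would be somewhat generous here rather than optimize, since the theorem only claims this cardinality suffices.

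The main obstacle is \textbf{Step 2}: ruling out that a \emph{different} irreducible $m$-neuron network $f_{\mathcal N'}$ manages to agree on $\mathcal X$ while having a different hyperplane arrangement. The subtlety is that $f_{\mathcal N'}$ could place a hyperplane passing through the thin gaps between my clusters, or could have a kink of very small magnitude that the finite clusters nearly miss; I control this by (i) choosing the $p_k$ and the cluster radii generically and small, so that "small magnitude" is impossible — the jump across $\mathcal{H}(\va_k,b_k)$ in $f_{\mathcal N}$ is a fixed nonzero vector independent of $\mathcal X$ — and (ii) using the neuron-count identity from Theorem \ref{th:minm}/\cite{minimal_network} ($n'$ distinct kink hyperplanes $\Rightarrow \ge n'$ neurons, and the refined $n,n+1,n+2$ dichotomy) to forbid $f_{\mathcal N'}$ from simultaneously matching all $m$ kinks of $f_{\mathcal N}$ and having an extra one. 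A secondary obstacle is the $K_1$ case, where two neurons share a hyperplane: agreement of first-order data at $p_k$ determines only $s_{k,1}\va_k + (\text{something})$; resolving $s_{k,1}$ and $s_{k,2}$ separately requires probing the kink from configurations that see the asymmetry, which is why the budget carries the factor $2m+2$ rather than $2m$.
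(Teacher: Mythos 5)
Your strategy (detect each kink of $f_{\mathcal N}$ with a pair of $(d+1)$-point clusters straddling each hyperplane, then propagate the affine pieces from an anchor region) is genuinely different from the paper's, but your Step 2 has a gap that your proposed fixes do not close. A cluster of $d+1$ affinely independent points in a small ball $B$ determines $f_{\mathcal N'}|_B$ only if $f_{\mathcal N'}$ is affine on $B$; if one of the (unknown) kink hyperplanes of $f_{\mathcal N'}$ passes through $B$, agreement at those $d+1$ points gives essentially no information about $f_{\mathcal N'}$ near $p_k$, and the chain ``same affine pieces on both sides $\Rightarrow$ same gradient jump $\Rightarrow$ same hyperplane'' collapses. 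Your remedy (i), choosing the $p_k$ and the radii generically and small, cannot work because the quantifiers go the wrong way: $\mathcal X$ is fixed first and $f_{\mathcal N'}$ ranges over all irreducible $m$-neuron networks afterwards, so an adversarial competitor may place its hyperplanes through your balls. Your remedy (ii), the neuron count, only bounds the number of adversarial hyperplanes by $m$; a single hyperplane can intersect many of your $2m+1$ full-dimensional balls, so counting does not protect even one cluster pair. (A smaller point: under the hypotheses of Theorem \ref{th:samplingc} the $m$ hyperplanes of $f_{\mathcal N}$ are already mutually distinct, so the $K_1$-type complication you budget for does not occur in $f_{\mathcal N}$; and once the arrangements are matched, $f_{\mathcal N'}$ cannot exhibit it either without exceeding $m$ neurons.)

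The paper avoids exactly this difficulty by a dimensional reduction: all samples lie on $md$ lines $\mathcal L_j$ in general position (``feasible'' in the sense of Definition \ref{def: feasible}), with $2m+2$ points per line, two in each of the $m+1$ intervals cut out by the hyperplanes of $f_{\mathcal N}$. Along a line, \emph{any} $m$-neuron competitor restricts to a continuous piecewise-linear function of one variable with at most $m$ breakpoints, so the location of its kinks relative to the samples is controlled combinatorially rather than generically; the values on the lines then recover the hyperplane arrangement (a pigeonhole argument on the $md$ intersection points carried by each hyperplane) and the remaining parameters via the linear-independence Lemma \ref{independent_relu} and the equivalence characterization of Theorem \ref{prop:ambiguity}. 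To salvage your construction you would need to replace the balls by segments, or otherwise prove that the competitor cannot simultaneously thread its $m$ hyperplanes through your clusters and still match the $2m+1$ distinct affine germs of $f_{\mathcal N}$ that they witness; as written, Step 2 is asserted rather than proved, and it is the crux of the theorem.
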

\begin{proof}
The proof can be found in Section \ref{sec: finite_sampling_relu}.
\end{proof}

\begin{remark}
As we will see in Theorem \ref{prop:ambiguity} (presented in Section \ref{sec: prof_finite_relu}), the equivalence class of shallow ReLU networks encompasses more than just permutations and positive scalings. This broader equivalence class significantly complicates the construction and analysis required to achieve global identifiability, in contrast to results in \cite{Stock}.
\end{remark}
\begin{remark}
The explicit construction of the set $\mathcal{X} \subseteq \mathbb{R}^d$ which is  appropriate for $f_{\mathcal{N}}$
  can be found in the proof. While this construction yields a viable sampling set, it is conceivable that more efficient alternatives exist, potentially utilizing fewer points while still guaranteeing exact identification.
We would like to mention  that the number of sampling points  presented in  Theorem \ref{th:samplingc} is not optimal. We leave the determination of the minimal sufficient sample size as an open question for interested readers to explore. 
\end{remark}
\subsection{Shallow  Analytic Network}
This subsection extends the investigation to the cases  where $\sigma(x)=\text{Sigmoid}(x)$ or $\sigma(x)=\tanh(x)$. Since 
\begin{equation}\label{eqn: tanh_sigmoid}
\text{tanh}(x) \equiv 2\cdot\text{Sigmoid}(2x)-1,
\end{equation}
 they  can be analyzed jointly in this section. Moreover, we can directly get that the identity
\begin{equation}\label{eqn: c_0}
\sigma(x)+\sigma(-x)\equiv c_0,
\end{equation}
where  $c_0=1$ if $\sigma(x)=\text{Sigmoid}(x)$, and $c_0= 0$ if $\sigma(x)=\tanh(x)$. 

Likewise, there are  two trivial scenarios for reducing the number of neurons:
 \begin{enumerate}[(i)]
 \item  $s\cdot \va = \boldsymbol{0}$: the equation $s\cdot \sigma(\left<\va, \vx\right> + b) + c = c'$ holds, with $c' = c + s\cdot \sigma(b)$;
  \item $(\va_{1}, b_{1}) = \epsilon\cdot (\va_{2}, b_{2})$ for some $\epsilon\in\{-1,+1\}$: the expression 
  \[
  s_1\cdot \sigma(\langle \va_1, \vx \rangle + b_{1}) + s_2\cdot \sigma(\langle \va_2, \vx \rangle + b_{2})+c
  \]
   simplifies to 
   \[
   (s_{2} + \epsilon\cdot  s_{1})\cdot \sigma(\langle \va_2, \vx \rangle + b_2)+c+c_0 \cdot s_{1} \cdot \boldsymbol{1}_{\epsilon < 0}.
   \] 
 \end{enumerate}
   Here, $\boldsymbol{1}_{\epsilon < 0}$ is defined as:
\[
\boldsymbol{1}_{\epsilon < 0}=\begin{cases}
0, &\epsilon\geq 0;\\
1, &\epsilon<0.
\end{cases}
\]
Therefore,  it is imperative to provide the definition of an admissible network in the context of the analytic activation function $\sigma$. The following definition is applicable to both $\sigma(x) = \text{Sigmoid}(x)$ and $\sigma(x) = \tanh(x)$.
\begin{defi}[Admissible Shallow Analytic Network]
\label{def: irreducible_g} A neural network $f_{\mathcal{N}}$ as defined in (\ref{eqn: f_N}) 
is considered an \textit{admissible network under the activation function $\sigma=\mathrm{Sigmoid}(x)$ or $\sigma=\tanh(x)$}, if it satisfies the following two assumptions:
\begin{enumerate}[{\rm (i)}]
\item For all $k \in \{1, \ldots, m\}$,  $s_k\cdot \va_k \neq \boldsymbol{0}$; 
 \item {
 For all constant $\epsilon \in \{-1,+1\}$ and all indices $k_1, k_2$ such that $1 \leq k_1 < k_2 \leq m$, 
 \[
 (\va_{k_1}, b_{k_1}) \neq \epsilon\cdot (\va_{k_2}, b_{k_2}).
 \]}
\end{enumerate}
\end{defi}
Corollary 1 in \cite{sussmann} demonstrates that the admissibility condition stated above is equivalent to the irreducibility condition for shallow analytic networks when $\sigma(x) = \tanh(x)$. Based on (\ref{eqn: tanh_sigmoid}), this result can be readily generalized to the case when $\sigma(x) = \text{Sigmoid}(x)$. The conclusion is succinctly articulated in Theorem \ref{coro: minimal_analytic}. 
Furthermore,  Vla{\v c}i\'c and B\"olcskei have expanded the scope of this discourse to include general directed acyclic graphs and nonlinear activation functions $\sigma$ that satisfy the affine symmetry condition (for a detailed exposition, refer to \cite{affine}).


\begin{theorem}\cite[Corollary 1]{sussmann}\label{coro: minimal_analytic}
Let $f_{\mathcal{N}}$
  be a shallow analytic network represented in the form of equation (\ref{eqn: f_N}) under activation function $\sigma(x)=\mathrm{Sigmoid}(x)$ or $\sigma(x)=\tanh(x)$. Then, $f_{\mathcal{N}}$
  is admissible if and only if it is irreducible.
\end{theorem}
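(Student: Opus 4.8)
## Proof Proposal for Theorem (Corollary 1 of Sussmann)

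\textbf{Overall strategy.} The statement is a citation to \cite[Corollary 1]{sussmann}, phrased for both $\sigma=\mathrm{Sigmoid}$ and $\sigma=\tanh$. Since Sussmann's original result is stated for $\tanh$, the plan is twofold: (1) recall why admissibility is equivalent to irreducibility for $\tanh$, essentially by citing Sussmann's characterization reproduced earlier in the excerpt (the three conditions (i)--(iii) from \cite{sussmann}); and (2) transfer the result to $\mathrm{Sigmoid}$ using the affine identity $\tanh(x)\equiv 2\,\mathrm{Sigmoid}(2x)-1$ from \eqref{eqn: tanh_sigmoid}, together with $\sigma(x)+\sigma(-x)\equiv c_0$ from \eqref{eqn: c_0}.

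\textbf{Step 1: The $\tanh$ case.} First I would show that the admissibility conditions of Definition~\ref{def: irreducible_g} coincide with the negation of Sussmann's three reducibility conditions (i)--(iii) for $\sigma=\tanh$. Admissibility condition (i) ($s_k\cdot\va_k\neq\boldsymbol 0$ for all $k$) directly rules out Sussmann's (i) (some $s_k=0$) and also Sussmann's (iii) (some $\sigma(\langle\va_k,\cdot\rangle+b_k)$ is constant, which for $\tanh$ happens exactly when $\va_k=\boldsymbol 0$). Admissibility condition (ii) ($(\va_{k_1},b_{k_1})\neq\epsilon(\va_{k_2},b_{k_2})$ for all $\epsilon\in\{-1,+1\}$ and $k_1\neq k_2$) is exactly the negation of Sussmann's (ii): because $\tanh$ is odd, $|\tanh(\langle\va_{k_1},\vx\rangle+b_{k_1})|=|\tanh(\langle\va_{k_2},\vx\rangle+b_{k_2})|$ for all $\vx$ forces, by analyticity and the strict monotonicity of $|\tanh|$ away from $0$, the affine functions $\langle\va_{k_1},\cdot\rangle+b_{k_1}$ and $\langle\va_{k_2},\cdot\rangle+b_{k_2}$ to be equal or negatives of each other. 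Hence admissibility $\iff$ irreducibility in the $\tanh$ case, which is precisely Sussmann's Corollary 1.

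\textbf{Step 2: Transfer to Sigmoid.} Given an admissible network $f_{\mathcal N}(\vx)=\sum_{k=1}^m s_k\,\mathrm{Sigmoid}(\langle\va_k,\vx\rangle+b_k)+c$, substitute $\mathrm{Sigmoid}(y)=\tfrac12\tanh(y/2)+\tfrac12$ to obtain
\[
f_{\mathcal N}(\vx)=\sum_{k=1}^m \frac{s_k}{2}\,\tanh\!\Big(\big\langle \tfrac{\va_k}{2},\vx\big\rangle+\tfrac{b_k}{2}\Big)+\Big(c+\tfrac12\sum_{k=1}^m s_k\Big).
\]
This is a $\tanh$-network with parameters $(\tfrac{\va_k}{2},\tfrac{b_k}{2},\tfrac{s_k}{2})$ and bias $c+\tfrac12\sum s_k$, and it has the same number of neurons $m$. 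The key observation is that the reparametrization $(\va,b,s)\mapsto(\va/2,b/2,s/2)$ is a bijection on parameter space that preserves both admissibility (since $s_k\va_k\neq\boldsymbol 0 \iff \tfrac{s_k}{2}\cdot\tfrac{\va_k}{2}\neq\boldsymbol 0$, and $(\va_{k_1},b_{k_1})=\epsilon(\va_{k_2},b_{k_2})\iff (\tfrac{\va_{k_1}}{2},\tfrac{b_{k_1}}{2})=\epsilon(\tfrac{\va_{k_2}}{2},\tfrac{b_{k_2}}{2})$) and the equivalence relation $\sim$ (since $f_{\mathcal N}$ and its $\tanh$-rewrite compute the same function). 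Therefore $f_{\mathcal N}$ is irreducible as a Sigmoid-network iff its $\tanh$-rewrite is irreducible as a $\tanh$-network iff (by Step 1) the $\tanh$-rewrite is admissible iff the original is admissible. One should also check the converse direction — that any Sigmoid-network with fewer neurons representing $f_{\mathcal N}$ would rewrite to a $\tanh$-network with fewer neurons — which again follows from the same bijective rewrite applied to the hypothetical smaller network.

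\textbf{Main obstacle.} There is no deep obstacle here since the result is essentially quoted; the only delicate point is being careful that "irreducible" is defined as a property of the \emph{function} $f_{\mathcal N}\in V$ (minimal neuron count over \emph{all} representations), so the rewrite argument in Step 2 must show that the bijection between Sigmoid- and $\tanh$-parametrizations descends to a bijection on equivalence classes and does not accidentally create or destroy a lower-neuron representation. I would handle this by noting the rewrite $\mathrm{Sigmoid}\leftrightarrow\tanh$ is invertible neuron-by-neuron with no change in neuron count, so a representation of $f_{\mathcal N}$ with $m'<m$ neurons in one activation yields one with $m'$ neurons in the other; combined with Sussmann's result for $\tanh$, this closes the argument.
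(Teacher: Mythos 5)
Your proposal is correct and follows essentially the same route as the paper, which does not give a detailed proof but simply cites Sussmann's Corollary~1 for the $\tanh$ case and remarks that the identity $\tanh(x)\equiv 2\,\mathrm{Sigmoid}(2x)-1$ transfers the result to the Sigmoid case. Your Step~2 merely spells out the neuron-by-neuron bijective rewrite that the paper leaves implicit, and your care about irreducibility being a property of the function (so the rewrite must not create or destroy smaller representations) is the right point to check.
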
 

We now direct our attention to Question \ref{Q3} in the context of shallow analytic networks. While the answer to Question \ref{Q3} is negative for shallow ReLU networks, we demonstrate that an affirmative response can be established for shallow analytic networks.


\begin{theorem}\label{th:gsamp}
Consider any fixed positive integer $m\in \mathbb{Z}_{+}$. There exists a finite point set $\mathcal{X} \subset \mathbb{R}^d$
  such that the following statement holds:
Let $ f_{\mathcal{N}}$  and $f_{\mathcal{N}'}$   be  any two irreducible shallow analytic  networks  represented as 
\begin{equation}\label{eqn: f_n and f_n1}
 f_{\mathcal{N}}(\vx)=\sum_{k=1}^m s_k\cdot \sigma(\langle \va_k,\vx\rangle+b_k)+c\qquad \text{and}\qquad f_{\mathcal{N}'}(\vx)=\sum_{k=1}^m s_k'\cdot \sigma(\langle \va_k',\vx\rangle+b'_k)+c',
\end{equation}
where $\sigma(x)=\mathrm{Sigmoid}(x)$ or $\sigma(x)=\tanh(x)$.
Then, 
if $f_{\mathcal{N}}(\vx)=f_{\mathcal{N'}}(\vx)$ for all $\vx\in \mathcal{X}$, it follows that $\mathcal{N}\sim \mathcal{N}'$.
 \end{theorem}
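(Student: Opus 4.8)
The plan is to reduce the identification problem to the case $d=1$ by choosing a single rich direction, and then to use analyticity to pass from finitely many sample values to the full one-variable function. Concretely, fix a vector $\vv \in \mathbb{R}^d$ and consider the two univariate functions $g(t) := f_{\mathcal{N}}(t\vv)$ and $g'(t) := f_{\mathcal{N}'}(t\vv)$. Each is a one-dimensional shallow analytic network: $g(t) = \sum_{k=1}^m s_k \sigma(\langle \va_k,\vv\rangle t + b_k) + c$, and similarly for $g'$. The point is that if $\vv$ is chosen generically (avoiding a measure-zero set), then $\langle \va_k, \vv\rangle \neq 0$ for all $k$ and the scalars $\langle \va_k,\vv\rangle$ (together with the $b_k$'s) keep $g$ admissible as a one-dimensional network; the same for $g'$. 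So the strategy splits into two tasks: (a) sample $g$ at enough points on the line $\mathbb{R}\vv$ to force $g \equiv g'$ as functions on that line; (b) bootstrap from $g\equiv g'$ on one (or finitely many) lines back to $f_{\mathcal{N}} \sim f_{\mathcal{N}'}$ on all of $\mathbb{R}^d$.

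For task (a), the key tool is that a one-dimensional network $t\mapsto \sum_{k} s_k\sigma(\alpha_k t + \beta_k) + c$, after multiplying through by suitable analytic factors, or after taking sufficiently many derivatives, satisfies a linear ODE (or, for $\tanh$/Sigmoid, extends to a meromorphic function with poles at prescribed locations). More robustly: the difference $g - g'$ is a real-analytic function of $t$ that belongs to a finite-dimensional-after-clearing-denominators family; I would argue that $g-g'$ is analytic on all of $\mathbb{R}$, and that if it vanishes on a set of points of size exceeding some explicit bound $N(m)$ (depending only on $m$, because the "dimension" of the space of such differences is bounded in terms of $m$), then $g \equiv g'$. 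The cleanest route is: $h(t) := g(t) - g'(t)$ is a finite sum $\sum_{j} c_j \sigma(\alpha_j t + \beta_j) + \text{const}$ with at most $2m$ nonconstant terms; such an $h$, if not identically zero, has only finitely many zeros on $\mathbb{R}$ — in fact at most a bound depending only on $m$, since $h$ extends to a meromorphic function on $\mathbb{C}$ with a controlled number of poles, hence controlled zeros in any bounded region, and one controls behavior at infinity separately. Choosing $\mathcal{X}$ on the line $\mathbb{R}\vv$ to have more than this many points forces $h\equiv 0$ on $\mathbb{R}\vv$.

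For task (b), once $f_{\mathcal{N}}$ and $f_{\mathcal{N}'}$ agree on a line through the origin in a generic direction, I want to conclude they agree everywhere and then invoke Theorem~\ref{coro: minimal_analytic} (irreducibility $\iff$ admissibility) to get $\mathcal{N}\sim\mathcal{N}'$. Agreement on one generic line is not obviously enough; the natural fix is to use finitely many lines. Since $f_{\mathcal{N}}$ and $f_{\mathcal{N}'}$ are real-analytic on $\mathbb{R}^d$, their difference vanishes on an open set iff it vanishes identically; so it suffices to arrange that $\mathcal{X}$ contains enough points to force agreement on, say, $d$ generic lines through a common generic point, or on a small grid, from which analyticity (or a direct dimension count on the parameters) propagates agreement to all of $\mathbb{R}^d$. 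Alternatively — and this is probably the slickest — one uses that from $g\equiv g'$ on a single generic line one already recovers, via the admissibility/uniqueness theorem of Sussmann applied to the univariate restrictions, the multiset $\{(\langle\va_k,\vv\rangle, b_k, s_k)\}$ up to the sign ambiguity $\epsilon_k$; repeating for $d$ generic directions $\vv_1,\dots,\vv_d$ pins down each $\va_k$ (its inner products with a basis) and the common $b_k, s_k, c$, hence $\mathcal{N}\sim\mathcal{N}'$. Counting: each line needs $O(m)$ points, there are finitely many lines, plus one needs to handle the combinatorial matching of which neuron on line $\vv_i$ corresponds to which on line $\vv_j$ (solvable because the $b_k$'s and $s_k$'s are shared and, generically, separate the neurons). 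So $\#\mathcal{X} = O(md)$, independent of the networks.

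The main obstacle I anticipate is task (b)'s matching step together with the sign ambiguity: the univariate restriction $g$ only determines the data $\{(\langle\va_k,\vv\rangle, b_k, s_k)\}$ up to $\epsilon_k\in\{\pm1\}$ acting on $(\langle\va_k,\vv\rangle, b_k, s_k)$, and a priori different directions $\vv_i$ could see inconsistent sign choices, or the index permutation realizing equivalence on line $\vv_i$ could differ from that on $\vv_j$. Resolving this requires choosing the directions $\vv_1,\dots,\vv_d$ so that the induced scalars $\langle\va_k,\vv_i\rangle$ are pairwise distinct across $k$ for each $i$ and the $b_k$'s (which are direction-independent!) already distinguish the neurons — then the shared $b_k$ forces a single global permutation and a consistent global sign, after which linear algebra reconstructs the $\va_k$. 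Making this work while keeping $\mathcal{X}$ finite and explicit, and checking that the finitely many "bad" choices of directions form a measure-zero set that one simply avoids, is the technical heart of the argument; the rest (zero-counting for univariate analytic networks, reduction to admissibility) is comparatively routine given the cited results.
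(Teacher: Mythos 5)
Your high-level plan (restrict to lines through the origin, count zeros of the univariate restrictions, then reassemble the $d$-dimensional parameters) matches the paper's strategy in outline, but there are two genuine gaps in how you propose to execute it.

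First, and most seriously, the uniformity issue. The theorem demands a \emph{fixed} finite set $\mathcal{X}$ that works for \emph{all} pairs of irreducible networks. Your proposal repeatedly relies on choosing directions ``generically (avoiding a measure-zero set),'' and at the end you say the bad directions ``form a measure-zero set that one simply avoids.'' But that bad set depends on the unknown weight vectors $\va_k,\va_k'$: for any fixed finite collection of directions an adversary can place two weights so that their difference is orthogonal to a chosen direction, collapsing two neurons on that line and breaking both your univariate recovery and the admissibility hypothesis needed to invoke Sussmann's theorem there. Your count of $d$ lines and $O(md)$ points is therefore not enough. The paper resolves this with a combinatorial, not generic, device: it takes the directions to be a full spark frame $\cV$ of size $N=\binom{4m}{2}\cdot(d-1)+1$ and proves (Lemma \ref{le:fullspark}) that for \emph{any} collection of at most $4m$ distinct vectors, at least one $\vv\in\cV$ separates all of them --- a pigeonhole argument. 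This is the missing idea that makes the point set universal.

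Second, your task (b) --- recovering $\{(\langle\va_k,\vv_i\rangle,b_k,s_k)\}$ on each of $d$ lines up to signs and permutations and then matching across lines --- is exactly the step you identify as the ``technical heart,'' and it is left unresolved; the sign and permutation consistency across directions does not follow just from the $b_k$ being shared (distinct neurons may share biases). The paper avoids this matching problem entirely: it proves directly, by contradiction, that the multisets $\{(\va_k,b_k)\}$ and $\{(\va_k',b_k')\}$ of full $d$-dimensional weight--bias pairs coincide, by restricting the difference $f_{\mathcal N}-f_{\mathcal N'}$ (a univariate network with at most $2m$ neurons) to a single good direction supplied by the full spark frame and applying a zero-counting lemma (Lemma \ref{lem: sampling}: a nonzero univariate network with $n$ admissible neurons has fewer than $2^n$ real zeros, proved by clearing denominators to get an exponential sum forming a Chebyshev system). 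A mismatched neuron would then have coefficient zero, contradicting irreducibility. Your sketched meromorphic zero-counting argument is plausible but not carried out, and it is not ``routine''; the explicit $2^{2m}$ bound (hence $2^{2m}$ points per line, $(\binom{4m}{2}(d-1)+1)\cdot 2^{2m}$ points total) is what the paper's Chebyshev-system argument delivers.
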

\begin{remark}
 The proof offers a detailed construction of the finite point set $\mathcal{X}$, with a cardinality of $\#\mathcal{X} =  (\binom{4m}{2}\cdot (d-1)+1) \cdot 2^{2m} $.
   To our knowledge, this theorem is the first to demonstrate exact identification of shallow analytic networks using only a finite number of deterministic samples. Our method differs significantly from previous  studies, such as those discussed in \cite{finite_analytic1,finite_analytic2,minimal_samples}, {in the following three key aspects}:
\begin{enumerate}[{\rm (i)}]
\item It is uniform, providing a fixed set of deterministic sampling points for all networks, unlike previous non-uniform approaches that generate random samples based on specific network.
\item It relies solely on function values at sampling points, without requiring gradient information.
\item It does not impose any restrictions on the number of neurons.
\end{enumerate}

It is worth noting that the substantial size of $\mathcal{X}$ presents a compelling direction for future research: the reduction and optimization of the required number of sampling points. This potential refinement could further enhance the practicality and efficiency of the proposed construction.
\end{remark}
 
\section{Proof of Theorem \ref{th:minm}}\label{sec_min}
We would like to highlight that the following identity plays a crucial role in our argument:
\begin{equation}\label{eq: lemma_temp}
\sigma(x)\equiv x+\sigma(-x),
\end{equation}
where $\sigma(x)=\text{ReLU}(x)$.
We now present a lemma that is useful in proving Theorem \ref{th:minm}.
\begin{lem}\label{independent_relu}
Assume that the hyperplanes $\mathcal{H}(\va_k,b_k)$, $k=1,\ldots,m$, are mutually distinct for some $\va_k\neq \boldsymbol{0}$ and $b_k\in \mathbb{R}$, $k=1,\ldots,m$. Here $\mathcal{H}( \cdot,\cdot )$ is defined in  (\ref{H}). Furthermore, suppose there exist constants $c_0, c_1, \ldots, c_m \in \mathbb{R}$, along with a vector $\boldsymbol{d} \in \mathbb{R}^d$, such that for any $\vx \in \mathbb{R}^d$, the following holds:
\begin{equation}\label{eqn: temp_independent}
\sum_{k=1}^m c_k\cdot \sigma(\langle \va_k,\vx\rangle+b_k)+\langle \boldsymbol{d}, \vx\rangle+c_0=0.
\end{equation}
Then
\[c_0=c_1= \cdots=c_m=0,\qquad \text{and}\qquad \boldsymbol{d}=\boldsymbol{0}. 
\]
\end{lem}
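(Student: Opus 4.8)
The plan is to exploit the piecewise-linear structure of $\sigma = \mathrm{ReLU}$: the function $\sigma(\langle \va_k,\vx\rangle + b_k)$ is affine on each side of the hyperplane $\mathcal{H}(\va_k,b_k)$ and has a ``kink'' precisely along that hyperplane, and the kinks of distinct hyperplanes cannot cancel each other. First I would observe that the left-hand side of \eqref{eqn: temp_independent} is a continuous piecewise-affine function of $\vx$ that vanishes identically, so in particular it is differentiable everywhere. The idea is to examine the behavior near a single hyperplane $\mathcal{H}(\va_{k_0},b_{k_0})$. Pick a point $\vx_0$ lying on $\mathcal{H}(\va_{k_0},b_{k_0})$ but on \emph{none} of the other hyperplanes $\mathcal{H}(\va_k,b_k)$, $k\neq k_0$ (such a point exists because the hyperplanes are mutually distinct, so each intersection $\mathcal{H}(\va_{k_0},b_{k_0})\cap\mathcal{H}(\va_k,b_k)$ is a proper affine subspace of $\mathcal{H}(\va_{k_0},b_{k_0})$, and a finite union of such subspaces cannot cover the hyperplane). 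In a small ball around $\vx_0$, every term $c_k\sigma(\langle\va_k,\vx\rangle+b_k)$ with $k\neq k_0$ is affine, so the only possible source of non-smoothness is the $k_0$ term. Since the total sum is identically zero (hence smooth), the ``jump in gradient'' contributed by $c_{k_0}\sigma(\langle\va_{k_0},\vx\rangle+b_{k_0})$ across $\mathcal{H}(\va_{k_0},b_{k_0})$ must vanish; that jump is $c_{k_0}\va_{k_0}$ (the gradient is $c_{k_0}\va_{k_0}$ on the side where $\langle\va_{k_0},\vx\rangle+b_{k_0}>0$ and $\boldsymbol 0$ on the other side). Since $\va_{k_0}\neq\boldsymbol 0$, this forces $c_{k_0}=0$. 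Applying this to each $k_0\in\{1,\ldots,m\}$ gives $c_1=\cdots=c_m=0$.

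Once all $c_k$ vanish, \eqref{eqn: temp_independent} collapses to $\langle\boldsymbol d,\vx\rangle + c_0 = 0$ for all $\vx\in\mathbb{R}^d$, which immediately yields $\boldsymbol d = \boldsymbol 0$ (take $\vx$ ranging over a basis, or differentiate) and then $c_0 = 0$ (set $\vx=\boldsymbol 0$). This completes the argument.

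To make the gradient-jump step rigorous without invoking distributional derivatives, I would instead argue along a line: fix $k_0$ and choose $\vx_0\in\mathcal{H}(\va_{k_0},b_{k_0})$ avoiding all other hyperplanes as above, and let $\vu$ be a direction with $\langle\va_{k_0},\vu\rangle \neq 0$ (e.g.\ $\vu=\va_{k_0}$). Consider $g(t) := \sum_{k=1}^m c_k\sigma(\langle\va_k,\vx_0+t\vu\rangle+b_k) + \langle\boldsymbol d,\vx_0+t\vu\rangle + c_0$, a function of $t\in\mathbb{R}$ that is identically zero on a neighborhood of $t=0$. For small $|t|$, every summand with $k\neq k_0$ is affine in $t$, and the $k_0$ summand equals $c_k \sigma(\langle\va_{k_0},\vu\rangle t)$, which is $c_{k_0}\langle\va_{k_0},\vu\rangle\max\{t,0\}$ up to sign — a genuinely non-affine function of $t$ unless $c_{k_0}\langle\va_{k_0},\vu\rangle = 0$. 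Comparing the right-derivative and left-derivative of $g$ at $t=0$ (both must be $0$) gives $c_{k_0}\langle\va_{k_0},\vu\rangle = 0$, hence $c_{k_0}=0$. This one-dimensional reduction turns the whole thing into the elementary fact that $t\mapsto\max\{t,0\}$ is not affine near $0$.

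The main obstacle is purely bookkeeping: ensuring that a point $\vx_0$ on $\mathcal{H}(\va_{k_0},b_{k_0})$ can be chosen off all the other hyperplanes. This uses the hypothesis that the hyperplanes are \emph{mutually distinct} in an essential way — if $\mathcal{H}(\va_{k},b_k)$ coincided with $\mathcal{H}(\va_{k_0},b_{k_0})$ for some $k\neq k_0$, the kinks would lie on the same locus and could conceivably cancel, which is exactly the degenerate situation the lemma's hypothesis rules out (and which corresponds, in the reparametrized form \eqref{eqn: fN_new}, to the $K_1$ terms). Everything else is routine, and no step should require more than the observation that $\mathrm{ReLU}$ has a single kink at the origin.
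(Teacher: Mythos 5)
Your proof is correct and follows essentially the same route as the paper: the paper identifies the closure of the non-differentiable set of the left-hand side with $\bigcup_{k:\,c_k\neq 0}\mathcal{H}(\va_k,b_k)$ and concludes from $f\equiv 0$ that this union is empty, which is exactly your kink-cancellation argument. Your line-restriction step merely makes explicit (via one-sided derivatives at a point of $\mathcal{H}(\va_{k_0},b_{k_0})$ avoiding the other hyperplanes) the identification that the paper asserts without detail, and the conclusion $\boldsymbol{d}=\boldsymbol{0}$, $c_0=0$ from the residual affine identity is handled identically.
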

\begin{proof}
Take 
\[
f(\vx)=\sum_{k=1}^m c_k\cdot\sigma(\langle \va_k,\vx\rangle+b_k)+\langle \boldsymbol{d}, \vx\rangle+c_0.
\] 
Denote 
\[
\Omega := \{c_k : c_k \neq 0, \ k=1,\ldots,m\}
\]
 and $S$ as the non-differentiable set of $f$ with $\mathrm{cl}S$ as the closure of the set $S$.
Since $f\equiv 0$, it follows that
 \[
 \cup_{k\in \Omega} \mathcal{H}(\va_k,b_k)={\rm cl} S=\emptyset.
 \] 
  Consequently, we can conclude that $\#\Omega = 0$, which implies that $c_k= 0$, $k=1,\ldots,m$. As a result,  (\ref{eqn: temp_independent}) simplifies to
 \[
 \langle \boldsymbol{d},\vx\rangle+c_0\equiv 0.
 \]
 Thus, we also derive that $c_0 = 0$ and $\boldsymbol{d} = \boldsymbol{0}$. This completes the proof.
\end{proof}

\begin{proof}[Proof of Theorem \ref{th:minm}]
A simple observation is that 
\[
s\cdot\sigma(\langle \va,\vx\rangle+b) \equiv s\cdot \|\va\|_2\cdot \sigma(\langle \va/\|\va\|_2,\vx\rangle+b/\|\va\|_2),
\]
for any $\va\neq \boldsymbol{0}$ and $s,b\in \mathbb{R}$.
Therefore,  Without loss of generality, we can assume that ${\va}_k \in \mathbb{S}^{d-1}$
for all $k \in K_1 \cup K_2$  throughout the remainder of the proof.

We begin by establishing two  results that play a pivotal role in the subsequent proof.
Firstly, for any $\epsilon_k \in \{-1, +1\}$, $k \in K_1$,  the following relationship is established for all $\vx \in \mathbb{R}^{d}$:
\begin{equation}\label{1_temp1}
\begin{aligned}
&\sum_{k\in K_1}(s_{k,1}\cdot \sigma(\langle \va_k,\vx\rangle+b_k)+s_{k,2}\cdot\sigma(\langle -\va_k,\vx\rangle-b_k))\\
=&\sum_{k\in K_1}(s_{k,i_k}\cdot \sigma(\inner{\epsilon_{k}\cdot \va_k,\vx}+\epsilon_k\cdot b_k)+s_{k,3-i_k}\cdot \sigma(\inner{-\epsilon_k\cdot \va_k,\vx}-\epsilon_k\cdot  b_k))\\
=&\sum_{k\in K_1}(s_{k,1}+s_{k,2})\cdot \sigma(\inner{-\epsilon_{k}\cdot \va_k,\vx}-\epsilon_{k}\cdot b_{k})+\sum_{k\in K_1}s_{k,i_k} \cdot (\langle \epsilon_{k}\cdot \va_k,\vx\rangle+\epsilon_k \cdot b_k),
\end{aligned}
\end{equation}
where $i_k$ is taken as in (\ref{i_index}) and  the final equality above is derived from (\ref{eq: lemma_temp}).

Secondly, by iteratively applying (\ref{eq: lemma_temp}), we can derive that, for any subset $K_2'\subseteq K_2$, the following result holds true for all $\vx \in \mathbb{R}^{d}$:
\begin{equation}\label{1_temp2}
\begin{aligned}
&\sum_{k\in K_2}s_k\cdot \sigma (\langle\va_k,\vx\rangle+b_k)=\sum_{k\in K_2'}s_k\cdot \sigma(\inner{\va_k,\vx}+b_k)+\sum_{k \in K_2\setminus K_2'}s_k\cdot \sigma(\inner{\va_k,\vx}+b_k)\\
=&\sum_{k\in K_2'}s_k\cdot \sigma(\inner{-\va_k,\vx}-b_k)+\sum_{k \in K_2\setminus K_2'}s_k\cdot \sigma(\inner{\va_k,\vx}+b_k)+\sum_{k\in K_2'}s_k\cdot (\inner{\va_k,\vx}+b_k).
\end{aligned}
\end{equation}

Having established these results, based on the cardinality of $K_1$, we will now partition the proof into four distinct cases:
(1) $\#K_1 = 0$, (2) $\#K_1 = 1$, (3) $\#K_1 = 2$, (4) $\#K_1 \geq 3$.

\textbf{Case 1: $\#K_1 = 0$}. 
  
In this case, the number of neurons in $f_{\mathcal{N}}$
  is equal to $\#K_2$, i.e., $m=\#K_2$. We will demonstrate that this neuronal count 
 cannot be further reduced.
 Specifically, we will prove that if there exists a network $f_{\mathcal{N}'}: \mathbb{R}^{d}\rightarrow \mathbb{R}$ such that
$f_{\mathcal{N}'}(\vx) = \sum_{k=1}^{m'} s_k'\cdot \sigma(\langle \va_k', \vx \rangle + b_k') + c'$
and $\mathcal{N}\sim \mathcal{N}'$, then it follows that $m' \geq m$.

Let $S$ and $S'$ denote the sets of non-differentiable points of $f_{\mathcal{N}}$ and $f_{\mathcal{N}'}$, respectively. The closures of these sets are represented by $\mathrm{cl} S$ and $\mathrm{cl} S'$.  Consequently, we can establish the following relationship:
\[
\bigcup_{k\in K_2}\mathcal{H}(\va_k,b_k) = \mathrm{cl} S = \mathrm{cl} S' \subseteq \bigcup_{k=1}^{m'} \mathcal{H}(\va_k',b'_k).
\]
Given that the hyperplanes  $\mathcal{H}(\va_k,b_k)$, $k\in K_2$, are mutually distinct,  we can conclude that $m' \geq m=\#K_2$.

\textbf{Case 2: $\#K_1=1$}. 

\textbf{(1) Sufficient part}.
We will demonstrate that if (\ref{cond1: m=1}) holds true, then the number of neurons, $m=2 \#K_1+\#K_2=2+\# K_2$, can indeed be reduced.
By substituting equations (\ref{1_temp1}) and (\ref{1_temp2}) into the formulation of $f_{\mathcal{N}}$, we can derive the following result for all $\vx\in \mathbb{R}^{d}$:
\[
\begin{aligned}
f_{\mathcal{N}}(\vx)=&\sum_{k\in K_1}(s_{k,1}+s_{k,2})\cdot  \sigma(\inner{-\epsilon_{k}\cdot \va_k,\vx}-\epsilon_{k}\cdot  b_{k})+\sum_{k\in K_1}s_{k,i_k}\cdot (\langle \epsilon_{k}\cdot \va_k,\vx\rangle+\epsilon_k  \cdot b_k)\\
&+\sum_{k\in K_2'}s_k\cdot \sigma(\inner{-\va_k,\vx}-b_k)+\sum_{k \in K_2\setminus K_2'}s_k\cdot \sigma(\inner{\va_k,\vx}+b_k)+\sum_{k\in K_2'}s_k\cdot (\inner{\va_k,\vx}+b_k)+c\\
\overset{(a)}=&\sum_{k\in K_1}(s_{k,1}+s_{k,2})\cdot \sigma(\inner{-\epsilon_{k}\cdot \va_k,\vx}-\epsilon_{k}\cdot b_{k})
+\sum_{k\in K_2'}s_k\cdot\sigma(\inner{-\va_k,\vx}-b_k)\\
&+\sum_{k \in K_2\setminus K_2'}s_k\cdot\sigma(\inner{\va_k,\vx}+b_k)
+\sum_{k\in K_1}s_{k,i_k}\cdot \epsilon_k\cdot b_k+\sum_{k\in K_2'}s_k\cdot b_k+c,
\end{aligned}
\]
where  $(a)$ is derived from  (\ref{cond1: m=1}). This indicates a reduction in the number of neurons from $m$ to $\# K_1+\# K_2'+\# (K_2\setminus K_2')=m - 1$.

\textbf{(2) Necessary part}. We will demonstrate that (\ref{cond1: m=1}) necessarily holds when it is possible to reduce the number of neurons in the network $f_{\mathcal{N}}$.

Let us denote the reduced network as $f_{\mathcal{N}'}$, such that $\mathcal{N}\sim \mathcal{N}'$ and  the  neuronal count of $f_{{\mathcal N}'}$ is at most $m-1$. Given that $f_{\mathcal{N}}$
  and $f_{\mathcal{N}'}$ share the same set of non-differentiable points, 
  and considering that the hyperplanes $\mathcal{H}(\va_k, b_k)$, $k \in K_1 \cup K_2$, are mutually distinct with $\#K_1 + \#K_2 = m - 1$, we can conclude that $f_{\mathcal{N}'}$ must comprise exactly $m - 1$ neurons. Furthermore, since the closure of the non-differentiable point set of $f_{\mathcal{N}'}$
  is also given by $\bigcup_{k \in K_1 \cup K_2} \mathcal{H}(\va_k, b_k)$, we can express $f_{\mathcal{N}'}$
 in the following form:
\[
\begin{aligned}
f_{\NN'}(\vx)=&\sum_{k\in K_1}s'_{k}\cdot \sigma(\inner{-\epsilon_{k}\cdot \va_k,\vx}-\epsilon_k\cdot b_k)+\sum_{k\in K_2'}s'_k\cdot \sigma(\inner{-\va_k,\vx}-b_k)\\
&+\sum_{k\in K_2\setminus K_2'}s'_k\cdot\sigma(\inner{\va_k,\vx}+b_k)+c',
\end{aligned}
\] 
for some $\epsilon_k\in\{-1,+1\}$, $k\in K_1$, and $K_2'\subseteq K_2$.
We will now demonstrate that (\ref{cond1: m=1}) is indeed satisfied. Drawing upon (\ref{1_temp1}) and (\ref{1_temp2}), we arrive at the subsequent expression for all $\vx \in \mathbb{R}^{d}$:
\begin{equation}\label{eqn:temp}
\begin{aligned}
0=&f_{\NN}(\vx)-f_{\NN}'(\vx)
=\sum_{k\in K_1}(s_{k,1}+s_{k,2}-s_k')\cdot\sigma(\inner{-\epsilon_{k}\cdot \va_k,\vx}-\epsilon_{k} \cdot  b_{k})\\
&+\sum_{k\in K_2'}(s_k-s_k')\cdot \sigma(\inner{-\va_k,\vx}-b_k)+\sum_{k \in K_2\setminus K_2'}(s_k-s_k')\cdot \sigma(\inner{\va_k,\vx}+b_k)\\
&+\sum_{k\in K_1}s_{k,i_k}\cdot  (\langle \epsilon_{k}\cdot \va_k,\vx\rangle+\epsilon_k \cdot  b_k)+\sum_{k\in K_2'}s_k\cdot (\inner{\va_k,\vx}+b_k)+c-c'.
\end{aligned}
\end{equation}
According to Lemma \ref{independent_relu},  we employ  (\ref{eqn:temp}) to obtain
\[
\sum_{k \in K_1} \epsilon_k \cdot  s_{k,i_k}\cdot   \va_k + \sum_{k \in K_2'} s_k \cdot  \va_k = \boldsymbol{0}.
\]
Consequently, we derive condition (i).

\textbf{Case 3: $\# K_1=2$}.

\textbf{(1) Sufficient part}.
We will demonstrate that if (\ref{cond_m2}) is satisfied, the number of neurons $m$ can be reduced. By incorporating (\ref{1_temp1}) and (\ref{1_temp2}) into the formulation of $f_{\mathcal{N}}$, we can express it as follows for all $\vx \in \mathbb{R}^{d}$:
\[
\begin{aligned}
 f_{\mathcal{N}}(\vx)
=&\sum_{k\in K_1}(s_{k,1}+s_{k,2})\cdot  \sigma(\inner{-\epsilon_{k}\cdot \va_k,\vx}-\epsilon_{k}\cdot   b_{k})+\sum_{k\in K_1}s_{k,i_k}\cdot  (\langle \epsilon_{k}\cdot \va_k,\vx\rangle+\epsilon_k\cdot   b_k)\\
  &+\sum_{k\in K_2'}s_k\cdot \sigma(\inner{-\va_k,\vx}-b_k)+\sum_{k \in K_2\setminus K_2'}s_k \cdot \sigma(\inner{\va_k,\vx}+b_k)+\sum_{k\in K_2'}s_k \cdot (\inner{\va_k,\vx}+b_k)+c\\
&\overset{(b)}=\sum_{k\in K_1}(s_{k,1}+s_{k,2})\cdot  \sigma(\inner{-\epsilon_{k}\cdot \va_k,\vx}-\epsilon_{k}  \cdot b_{k})+\sum_{k\in K_2'}s_k \cdot\sigma(\inner{-\va_k,\vx}-b_k)\\
&+\sum_{k \in K_2\setminus K_2'}s_k\cdot \sigma(\inner{\va_k,\vx}+b_k)-c_0 \cdot \sigma(\langle \va_{k_0},\vx\rangle+b_{k_0})+c_0\cdot  \sigma(-\langle \va_{k_0},\vx\rangle-b_{k_0})\\
&+\sum_{k\in K_1}s_{k,i_k} \cdot \epsilon_k   \cdot b_k+\sum_{k\in K_2'}s_k\cdot  b_k+c_0\cdot  b_{k_0}+c,
 \end{aligned}
\]
where $(b)$  is derived from the following equality:
\[
\begin{aligned}
&\sum_{k\in K_1} s_{k,i_k} \cdot \langle \epsilon_k\cdot \va_k,\vx\rangle+\sum_{k\in K_2'}s_k\cdot  \langle \va_k,\vx\rangle
=-c_0 \cdot\langle \va_{k_0},\vx\rangle-c_0 \cdot  b_{k_0}+c_0 \cdot b_{k_0}\\
&=-c_0 \cdot \sigma(\langle \va_{k_0},\vx\rangle+b_{k_0})+c_0\cdot  \sigma(-\langle \va_{k_0},\vx\rangle-b_{k_0})+c_0 \cdot b_{k_0}.
\end{aligned}
\]
This equality is a direct consequence of  (\ref{cond_m2}) and  (\ref{eq: lemma_temp}). 

Since $k_0 \in K_1 \cup K_2$, one of the functions  $-c_0  \cdot  \sigma(\langle \va_{k_0},  \vx \rangle + b_{k_0})$ and $c_0 \cdot  \sigma(\langle -\va_{k_0},  \vx \rangle - b_{k_0})$ can be integrated into the expression:
\[
\sum_{k\in K_1}(s_{k,1}+s_{k,2})\cdot  \sigma(\inner{-\epsilon_{k}\cdot\va_k, \vx}-\epsilon_{k} \cdot b_{k})+\sum_{k\in K_2'}s_k\cdot \sigma(\inner{-\va_k, \vx}-b_k)+\sum_{k \in K_2\setminus K_2'}s_k\cdot \sigma(\inner{\va_k, \vx }+b_k).
\]
More concretely, the conclusion above holds true under the following three cases:
 \begin{enumerate}[(i)]
 \item  If $k_0\in K_1$, then 
\[
\begin{aligned}
&(s_{k_0,1}+s_{k_0,2})\cdot  \sigma(\inner{-\epsilon_{k_0}\cdot \va_{k_0}, \vx}-\epsilon_{k_0} \cdot b_{k_0})+\epsilon_{k_0} \cdot c_0\cdot  \sigma(\langle -\epsilon_{k_0}\cdot \va_{k_0},\vx \rangle-\epsilon_{k_0}  \cdot b_{k_0})\\
&=(s_{k_0,1}+s_{k_0,2}+\epsilon_{k_0}\cdot  c_0)\cdot  \sigma(\langle -\epsilon_{k_0}\cdot \va_{k_0}, \vx \rangle-\epsilon_{k_0}\cdot  b_{k_0});
\end{aligned}
\]
\item  If $k_0\in K_2'$, then 
\[
s_{k_0}\cdot   \sigma(\inner{-\va_{k_0}, \vx}- b_{k_0})+c_0\cdot   \sigma(\inner{-\va_{k_0}, \vx}- b_{k_0})=(s_{k_0}+c_0)\cdot   \sigma(\inner{-\va_{k_0}, \vx}- b_{k_0});
\]
\item  If $k_0\in K_2\setminus K_2'$, then 
\[
s_{k_0}\cdot   \sigma(\inner{\va_{k_0}, \vx}+b_{k_0})-c_0 \cdot  \sigma(\inner{\va_{k_0}, \vx}+b_{k_0})=(s_{k_0}-c_0)\cdot   \sigma(\inner{\va_{k_0}, \vx}+ b_{k_0}).
\]
\end{enumerate}
Thus, this denotes a reduction in the neuronal number to no more than $\#K_1 + \#K_2 + 1 = m + 1 - \#K_1 = m - 1$.

\textbf{(2) Necessary part}.
When the number of neurons of $f_{\mathcal{N}}$ can be reduced, we will demonstrate that (\ref{cond_m2}) is satisfied. Let us denote the reduced network as $f_{\mathcal{N}'}$. Given that $f_{\mathcal{N}}$ and $f_{\mathcal{N}'}$ share the same set of non-differentiable points, and considering that the hyperplanes  $\mathcal{H}(\va_k, b_k)$, $k \in K_1 \cup K_2$, are mutually distinct with $\#K_1 + \#K_2 = m - 2$, it follows that the number of neurons of $f_{\mathcal{N}'}$ must be either $m - 1$ or $m - 2$. Furthermore, given that the closure of the non-differentiable point set of $f_{\mathcal{N}'}$ is represented by $\bigcup_{k \in K_1 \cup K_2} \mathcal{H}(\va_k, b_k)$, and the number of neurons of $f_{\mathcal{N}'}$ is less than or equal to $m-1$,
{we can articulate $f_{\mathcal{N}'}$ in the following manner for all $\vx \in \mathbb{R}^{d}$:}
\begin{equation}\label{eqn: form1}
\begin{aligned}
f_{\NN'}(\vx)=&\sum_{k\in K_1}s'_{k}\cdot   \sigma(\inner{-\epsilon_{k}\cdot\va_k,\vx}-\epsilon_k\cdot b_k)\\
&+\sum_{k\in K_2'}s'_k \cdot  \sigma(\inner{-\va_k,\vx}-b_k)+\sum_{k\in K_2\setminus K_2'}s'_k \cdot  \sigma(\inner{\va_k,\vx}+b_k)\\
&-c_0 \cdot  \sigma(\langle \va_{k_0},\vx\rangle+b_{k_0})+c_0 \cdot  \sigma(-\langle \va_{k_0},\vx\rangle-b_{k_0})+c',
\end{aligned}
\end{equation}
for some   $\epsilon_k\in\{-1,+1\}$, $k\in K_1$,  $K_2'\subseteq K_2$, $k_0\in K_1\cup K_2$, and $c_0\in \mathbb{R}$, as one of the functions  $-c_0\cdot\sigma(\langle \va_{k_0},  \vx \rangle + b_{k_0})$ and $c_0\cdot\sigma(\langle -\va_{k_0},  \vx \rangle - b_{k_0})$ can be integrated into
\[
\sum_{k\in K_1}s'_k\cdot \sigma(\inner{-\epsilon_{k}\cdot\va_k, \vx}-\epsilon_{k} \cdot  b_{k})+\sum_{k\in K_2'}s'_k \cdot \sigma(\inner{-\va_k, \vx}-b_k)+\sum_{k \in K_2\setminus K_2'}s'_k\cdot  \sigma(\inner{\va_k, \vx}+b_k),
\]
 following similar lines of argument as previously presented.
Subsequently, based on (\ref{1_temp1}) and (\ref{1_temp2}), we obtain the following for all  $\vx \in \mathbb{R}^{d}$:
\begin{equation}\label{eqn: temp_k2}
\begin{aligned}
0 =&f_{\NN}(\vx)-f_{\NN}'(\vx)\\
=&\sum_{k\in K_1}(s_{k,1}+s_{k,2}-s_k') \cdot  \sigma(\inner{-\epsilon_{k}\cdot\va_k,\vx}-\epsilon_{k}\cdot b_{k})+\sum_{k\in K_1}s_{k,i_k}\cdot   (\langle \epsilon_{k}\cdot\va_k,\vx\rangle+\epsilon_k \cdot b_k)\\
&+\sum_{k\in K_2'}(s_k-s_k')\cdot  \sigma(\inner{-\va_k,\vx}-b_k)+\sum_{k \in K_2\setminus K_2'}(s_k-s_k')\cdot  \sigma(\inner{\va_k,\vx}+b_k)\\
&+\sum_{k\in K_2'}s_k\cdot  (\inner{\va_k,\vx}+b_k)+c_0 \cdot  \sigma(\langle \va_{k_0},\vx\rangle+b_{k_0})-c_0\cdot   \sigma(-\langle \va_{k_0},\vx\rangle-b_{k_0})+c-c'.
\end{aligned}
\end{equation}
Since  (\ref{eq: lemma_temp}) leads to the identity
\[
c_0\cdot    \sigma(\langle \va_{k_0}, \vx \rangle + b_{k_0}) - c_0 \cdot   \sigma(-\langle \va_{k_0}, \vx \rangle - b_{k_0}) \equiv c_0\cdot (\langle \va_{k_0}, \vx \rangle + b_{k_0}),
\]
and by invoking Lemma \ref{independent_relu},  (\ref{eqn: temp_k2}) arrives at  the following relationship:
\[
\sum_{k \in K_1} \epsilon_k \cdot   s_{k,i_k}\cdot    \va_k + \sum_{k \in K_2'} s_k \cdot   \va_k + c_0 \cdot   \va_{k_0} = \boldsymbol{0}.
\]
Thus, we can conclude that condition (ii) holds.

\textbf{Case 4: $\#K_1\geq 3$}.
Building upon  (\ref{1_temp1}) and (\ref{eq: lemma_temp}), we can directly conclude that,  for all $\epsilon_k \in \{-1, +1\}$, $k \in K_1$, it obtains that:
\begin{equation*}
\begin{aligned}
f_{\mathcal{N}}(\vx)=&\sum_{k\in K_1}(s_{k,1}+s_{k,2}) \cdot  \sigma(\inner{-\epsilon_{k}\cdot\va_k,\vx}-\epsilon_{k}\cdot b_{k})+\sum_{k\in K_1}s_{k,i_k}   (\langle \epsilon_{k}\cdot\va_k,\vx\rangle+\epsilon_k \cdot b_k)\\
&+\sum_{k\in K_2}s_k\cdot  \sigma (\langle\va_k,\vx\rangle+b_k)+c\\
=&\sum_{k\in K_1}(s_{k,1}+s_{k,2}) \cdot  \sigma(\inner{-\epsilon_{k}\cdot\va_k,\vx}-\epsilon_{k}\cdot b_{k})+\sum_{k\in K_2}s_k\cdot  \sigma (\langle\va_k,\vx\rangle+b_k)\\
&+\sigma\Big(\big\langle \sum_{k\in K_1}s_{k,i_k} \cdot  \epsilon_k \cdot  \va_k,\vx\big\rangle\Big)-\sigma\Big(\big\langle -\sum_{k\in K_1}s_{k,i_k}\cdot   \epsilon_k  \cdot \va_k,\vx\big\rangle\Big)+\sum_{k\in K_1}s_{k,i_k} \cdot  \epsilon_k\cdot   b_k+c.
\end{aligned}
\end{equation*}
Consequently, the number of neurons can be reduced to at most
$\#K_1 + \#K_2 + 2 \leq m - 1$, given that $2 \#K_1 + \#K_2 = m$ and $\#K_1 \geq 3$.
\end{proof}

\section{Proof of Theorem \ref{th:impos}}\label{sec: negative_answer}

\begin{proof}[Proof of Theorem \ref{th:impos}]
Consider $f_{\mathcal{N}}$ and $f_{\mathcal{N'}}$ defined as 
\[
\begin{aligned}
f_{\mathcal{N}}(\vx):=&\sigma(\langle \vw+\epsilon\cdot \vn,\vx\rangle+b)+\sigma(\langle \vw-\epsilon \cdot \vn,\vx\rangle+b)+\sum_{k=3}^{m}\sigma(\langle \va_k,\vx\rangle+b_k),\\
f_{\mathcal{N}'}(\vx):=&\sigma(\langle \vw+\epsilon' \cdot \vn,\vx\rangle+b)+\sigma(\langle \vw-\epsilon' \cdot \vn,\vx\rangle+b)+\sum_{k=3}^{m}\sigma(\langle \va_k,\vx\rangle+b_k),
\end{aligned}
\]
where $b \in \mathbb{R}$, $\vw \in \mathbb{R}^{d}$, $\vn \in \mathbb{S}^{d-1}$, $\epsilon'> \epsilon>0$, $\{\va_k\}_{k=3}^m\subseteq \mathbb{R}^{d}$ and $\{b_k\}_{k=3}^{m}\subset \R$. Here these parameters satisfy the following conditions:
\begin{enumerate}[(i)]
 \item  For all $j \in\{ 1, \ldots, n\}$, $\langle \vw, \vx_j \rangle + b \neq 0$; 
 \item $\langle  \vn,\vw \rangle = 0$;
 \item For all $j \in\{ 1, \ldots, n\}$, $\epsilon\cdot  |\langle \vn, \vx_j \rangle| < |\langle \vw, \vx_j \rangle + b|$ and $\epsilon' \cdot |\langle \vn, \vx_j \rangle| < |\langle \vw, \vx_j \rangle + b|$;
 \item The hyperplanes
 \[
\mathcal{H}(\vw+\epsilon\cdot \vn,b),\ \mathcal{H}(\vw-\epsilon\cdot \vn,b),\ \mathcal{H}(\vw+\epsilon'\cdot \vn,b),\ \mathcal{H}(\vw+\epsilon'\cdot \vn,b),\  \mathcal{H}(\va_k,b_k), \ \text{for}\ k=3,\ldots,m,
 \]
 are mutually distinct.
\end{enumerate}
According to Theorem \ref{th:minm},  $f_{\mathcal{N}}$ and $f_{\mathcal{N}'}$ are irreducible. Through straightforward computation, we derive that
\[
f_{\mathcal{N}}(\vx_{j})=f_{\mathcal{N}'}(\vx_{j})=\begin{cases}
2(\langle \vw,\vx_j\rangle+b)+\sum_{k=3}^{m}\sigma(\langle \va_k,\vx_j\rangle+b_k), & \ \text{if}\ \langle \vw,\vx_j\rangle+b>0;\\
\sum_{k=3}^{m}\sigma(\langle \va_k,\vx_j\rangle+b_k), &\  \text{otherwise}.
\end{cases}
\]
Moreover, given that $\langle\vw, \vn\rangle =0$, we can deduce:
\[
\{\vx\in \mathbb{R}^d : \langle \vw, \vx \rangle + b = 0\} \cap \{\vx\in \mathbb{R}^d : \langle \vn, \vx \rangle > 0\} \neq \emptyset.
\]
This implies the existence of $\vx_0 \in \mathbb{R}^{d}$ such that $\langle \vw, \vx_0 \rangle + b = 0$ and $\langle \vn, \vx_0 \rangle > 0$. Consequently, we have
\[f_{\mathcal{N}}(\vx_0) = \epsilon\cdot \langle \vn,\vx_0\rangle +\sum_{k=3}^{m}\sigma(\langle \va_k,\vx_0\rangle+b_k)\neq \epsilon' \cdot \langle \vn,\vx_0\rangle+\sum_{k=3}^{m}\sigma(\langle \va_k,\vx_0\rangle+b_k)= f_{\mathcal{N}'}(\vx_0),\]
which implies that $\mathcal{N} \not\sim \mathcal{N}'$.
\end{proof}

\section{Proof of Theorem \ref{th:samplingc}}\label{sec: finite_sampling_relu}\label{sec: prof_finite_relu}

 To lay the groundwork for our analysis, we first introduce a  relationship between the parameters of equivalent shallow ReLU networks. This relationship, reformulated in Theorem \ref{prop:ambiguity} below, is derived from \cite[Theorem 1]{notes} and proves instrumental in our subsequent examination of finite samplings for $f_{\mathcal N}$. While the original theorem in \cite{notes} presents only necessary conditions, we note that the sufficiency of these conditions can be readily established. For brevity, we omit the detailed proofs here.

 \begin{theorem}\cite[Theorem 1]{notes}\label{prop:ambiguity} 
 Let $f_{\mathcal{N}}$   and $f_{\mathcal{N'}}$ be irreducible shallow ReLU networks of the form:
\[
f_{\mathcal{N}}(\vx): = \sum_{k=1}^{m} s_{k} \cdot  \sigma(\langle \va_{k}, \vx \rangle + b_k) + c\quad 
\text{and}\quad f_{\mathcal{N'}}(\vx) := \sum_{k=1}^{m'} s'_{k} \cdot  \sigma(\langle \va'_{k}, \vx \rangle + b_k') + c',
\]
such that the hyperplanes $\mathcal{H}(\va_{k}, b_k),\ k=1,\ldots,m$ $($resp.  $\mathcal{H}(\va'_{k}, b'_k), \ k=1,\ldots,m'$$)$ are mutually distinct. Then, $\mathcal{N} \sim \mathcal{N}'$ if and only if the following conditions are satisfied:
\begin{enumerate}[{\rm (i)}]
\item $m=m'$;
\item 
 There exists a permutation $\pi$ of $\{1,2,\ldots,m\}$, and $\epsilon_k \in \{-1,+1\}$ and $\lambda_k > 0$, $k=1,\ldots,m$,
  such that: 
\begin{equation}\label{eq:eak1}
\epsilon_k\cdot    \lambda_k\cdot    {(\va_{k},b_k)}={(\va'_{\pi(k)},b'_{\pi(k)})}\quad \text{and}\quad
\frac{1}{\lambda_k}\cdot    s_{k} =s'_{\pi(k)},\quad k=1,\ldots,m;
\end{equation}
\item 
Setting $K:=\{k :\epsilon_k=-1, k=1,\ldots,m\}$, we have
\begin{equation}\label{equiv_extra}
\sum_{k\in K}s_k\cdot \va_k=\boldsymbol{0}\qquad \text{and}\qquad c'=\sum_{k\in K}s_k \cdot b_k+c.
\end{equation}
If $K = \emptyset$, (\ref{equiv_extra}) reduces to $c = c'$.
\end{enumerate}
\end{theorem}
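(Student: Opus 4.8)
\textbf{Proof proposal for Theorem \ref{prop:ambiguity}.}

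The plan is to prove both directions by exploiting the rigidity of the non-differentiable set of a ReLU network together with the independence result in Lemma \ref{independent_relu}. Throughout, I will freely use the identity $\sigma(x)\equiv x+\sigma(-x)$ from \eqref{eq: lemma_temp}, which is the only mechanism by which two admissible ReLU parametrizations can differ while still being sign-flipped on a common hyperplane.

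For the ``only if'' direction, suppose $f_{\mathcal N}(\vx)=f_{\mathcal N'}(\vx)$ for all $\vx\in\mathbb{R}^d$. First I would compare the closures of the non-differentiable sets of the two networks. Since the hyperplanes within each network are mutually distinct and $s_k\va_k\neq\boldsymbol{0}$, the closure of the non-differentiable set of $f_{\mathcal N}$ is exactly $\bigcup_{k=1}^m\mathcal{H}(\va_k,b_k)$, and similarly for $f_{\mathcal N'}$; equating these two unions of distinct hyperplanes forces $m=m'$ and yields a permutation $\pi$ of $\{1,\dots,m\}$ with $\mathcal{H}(\va_k,b_k)=\mathcal{H}(\va'_{\pi(k)},b'_{\pi(k)})$ for each $k$. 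Two normal vectors cut out the same hyperplane iff they are nonzero scalar multiples, so there exist $\epsilon_k\in\{-1,+1\}$ and $\lambda_k>0$ with $\epsilon_k\lambda_k(\va_k,b_k)=(\va'_{\pi(k)},b'_{\pi(k)})$, giving the first half of \eqref{eq:eak1}. Next I would substitute this relation into the identity $f_{\mathcal N}-f_{\mathcal N'}\equiv 0$: for each index in $K=\{k:\epsilon_k=-1\}$ rewrite $s'_{\pi(k)}\sigma(\langle\va'_{\pi(k)},\vx\rangle+b'_{\pi(k)})=\lambda_k s'_{\pi(k)}\sigma(\langle-\va_k,\vx\rangle-b_k)$ and then use \eqref{eq: lemma_temp} to split off an affine term $\langle\va_k,\vx\rangle+b_k$; for indices outside $K$ the two $\sigma$-terms already sit on the same side. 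Collecting everything, $f_{\mathcal N}-f_{\mathcal N'}$ becomes a combination $\sum_k \gamma_k\sigma(\langle\pm\va_k,\vx\rangle\pm b_k)+\langle\vd,\vx\rangle+c_0\equiv 0$ over the $m$ distinct hyperplanes, where $\gamma_k$ involves $s_k-\lambda_k^{-1}s'_{\pi(k)}$ (up to the factor $\lambda_k$), $\vd=\sum_{k\in K}s_k\va_k$, and $c_0=\bigl(\sum_{k\in K}s_kb_k+c\bigr)-c'$. Lemma \ref{independent_relu} then forces every $\gamma_k=0$, $\vd=\boldsymbol{0}$, and $c_0=0$, which is precisely the second half of \eqref{eq:eak1} together with \eqref{equiv_extra}.

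For the ``if'' direction, assume (i)--(iii) hold. I would start from $f_{\mathcal N'}(\vx)=\sum_{k=1}^m s'_{\pi(k)}\sigma(\langle\va'_{\pi(k)},\vx\rangle+b'_{\pi(k)})+c'$, substitute $(\va'_{\pi(k)},b'_{\pi(k)})=\epsilon_k\lambda_k(\va_k,b_k)$ and $s'_{\pi(k)}=\lambda_k^{-1}s_k$, and use positive homogeneity $\sigma(\lambda_k t)=\lambda_k\sigma(t)$ for $\lambda_k>0$. For $k\notin K$ this returns $s_k\sigma(\langle\va_k,\vx\rangle+b_k)$ directly; for $k\in K$ it gives $s_k\sigma(\langle-\va_k,\vx\rangle-b_k)$, which by \eqref{eq: lemma_temp} equals $s_k\sigma(\langle\va_k,\vx\rangle+b_k)-s_k(\langle\va_k,\vx\rangle+b_k)$. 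Summing over $k\in K$, the collected affine correction is $-\langle\sum_{k\in K}s_k\va_k,\vx\rangle-\sum_{k\in K}s_kb_k$, which vanishes in its linear part by the first equation of \eqref{equiv_extra} and, together with $c'=\sum_{k\in K}s_kb_k+c$, restores the constant to $c$. Hence $f_{\mathcal N'}(\vx)=\sum_{k=1}^m s_k\sigma(\langle\va_k,\vx\rangle+b_k)+c=f_{\mathcal N}(\vx)$ for all $\vx$, i.e.\ $\mathcal N\sim\mathcal N'$.

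The main obstacle is bookkeeping rather than conceptual: one must be careful that after all the sign flips the $m$ hyperplanes appearing in the final identity are still mutually distinct (they are, since flipping $\va_k\mapsto-\va_k,\ b_k\mapsto-b_k$ preserves $\mathcal{H}(\va_k,b_k)$), so that Lemma \ref{independent_relu} applies verbatim; and one must track the constant and linear contributions produced by each application of \eqref{eq: lemma_temp} so that the coefficients $\gamma_k$, $\vd$, and $c_0$ are assembled correctly. The hypothesis of distinct hyperplanes within each network is essential here — without it the non-differentiable set no longer pins down the hyperplanes and the argument via Lemma \ref{independent_relu} breaks down — which is exactly why the theorem is stated under that assumption.
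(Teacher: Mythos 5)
Your proposal is correct and follows essentially the same route the paper intends (and spells out in a commented-out proof in the source): the ``only if'' direction identifies the two families of hyperplanes via the closure of the non-differentiable set to get $m=m'$ and the relation $\epsilon_k\lambda_k(\va_k,b_k)=(\va'_{\pi(k)},b'_{\pi(k)})$, then uses $\sigma(x)\equiv x+\sigma(-x)$ and Lemma~\ref{independent_relu} to force the coefficient matching together with \eqref{equiv_extra}, while the ``if'' direction is the same direct computation. Your bookkeeping of $\gamma_k$, $\vd$, and $c_0$ is consistent (whether one flips the $\sigma(-\cdot)$ terms of $f_{\mathcal{N}'}$ or the $\sigma(+\cdot)$ terms of $f_{\mathcal{N}}$, the lemma yields the same conclusions), so no gap remains.
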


To pave the way for our subsequent proof and enhance clarity, we introduce the following key concept, which proves instrumental in constructing the set of sampling points.

\begin{defi}
\label{def: feasible}
Consider $f_{\mathcal{N}}$   in the form of equation (\ref{eqn: f_N}) with the family of hyperplanes $\{\mathcal{H}(\va_k,b_k)\}_{k=1}^m$ for  $\va_k\in \mathbb{R}^{d}$ and $b_k\in \mathbb{R}$, $k=1,\ldots,m$. Take $\vu_{j},\vv_{j}\in \mathbb{R}^{d}$,  $j=1,\ldots,md$.
For each $j\in\{1,\ldots,md\}$, denote $\mathcal{L}_j:=\{\vu_{j}+t\cdot\vv_{j} : t\in \mathbb{R}\}$.
We say that the collection of the lines $\{\mathcal{L}_j\}_{j=1}^{md}$
is feasible with respect to $f_{\mathcal{N}}$, if:

\begin{enumerate}[{\rm (i)}]
\item ${\rm rank}([\boldsymbol{v}_1, \ldots, \boldsymbol{v}_{md}])=d$;
\item Denote the  intersection points between $\{\mathcal{L}_{j}\}_{j=1}^{md}$ and  $\{\mathcal{H}(\va_k, b_k)\}_{k=1}^m$ as 
\begin{equation}\label{eqn: z_k}
\vz_{j,k}:=\mathcal{L}_j\cap  \mathcal{H}(\va_k, b_k), \quad 1\leq j\leq md\quad \text{and} \quad 1\leq k\leq m.
\end{equation}
The points $\vz_{j,k}$, $j=1,\ldots,md$, and $k=1,\ldots,m$, are mutually distinct;
 \item For any point set 
 \[
 {\widetilde{\mathcal{Z}}}:=\{\widetilde{\vz}_1,\ldots,\widetilde{\vz}_d\}\subseteq \{{\vz}_{j,k}\}_{1\leq j\leq md,1\leq k\leq m},
 \]
   if $\widetilde{\mathcal{Z}}\subseteq \mathcal{H}(\va_{k_0},b_{k_0})$ for some $k_0\in \{1,\ldots,m\}$, then $\widetilde{\mathcal{Z}}$ can uniquely determine $\mathcal{H}(\va_{k_0},b_{k_0})$, or equivalently,
\begin{equation}\label{eqn: span_property}
\rank([\widetilde{\vz}_2-\widetilde{\vz}_1,\widetilde{\vz}_3-\widetilde{\vz}_1,\ldots,\widetilde{\vz}_{d}-\widetilde{\vz}_1])=d-1.
\end{equation}
\end{enumerate}
\end{defi}
\begin{remark}
\label{rem: key_rem_correct}
For almost all sets of vectors $\{\boldsymbol{v}_j\}_{j=1}^{d} \subseteq \mathbb{R}^d$, condition $\mathrm{(i)}$ holds. Assume that the hyperplanes $\mathcal{H}(\va_k, b_k)$, $k = 1, \ldots, m$, are mutually distinct. Then, for almost all lines $\mathcal{L}$, the intersection points between $\mathcal{L}$ and the hyperplanes $\mathcal{H}(\va_k, b_k)$, $k=1,\ldots,m$, are distinct. Furthermore, for almost all sets of $d$ points in a given hyperplane $\mathcal{H}(\va_{k_0}, b_{k_0})$  for some $k_0 \in \{1, \ldots, m\}$, the condition expressed in (\ref{eqn: span_property}) is satisfied. Consequently, for almost all parameters $\vu_j, \vv_j \in \mathbb{R}^d$, $j = 1, \ldots, md$, the collection $\{\mathcal{L}_j\}_{j=1}^{md}$ is feasible with respect to $f_{\mathcal{N}}$.
\end{remark}

The following lemma demonstrates that if the collection of lines $\{\mathcal{L}_k\}_{k=1}^{md}$
 is feasible with respect to $f_{\mathcal{N}}$, then for any other neural network $f_{\mathcal{N'}}$, the condition $\mathcal{N} \sim \mathcal{N'}$
  holds, provided that the function values of these two networks are identical on the lines $\{\mathcal{L}_k\}_{k=1}^{md}$. This lemma facilitates the proof of Theorem \ref{th:samplingc}.
\begin{lem}\label{lem: tec_hyperplane}
Let $f_{\mathcal{N}}: \mathbb{R}^d \to \mathbb{R}$ be an irreducible shallow ReLU network with $m$ neurons, expressed in the form of (\ref{eqn: f_N}). Assume that the hyperplanes $\mathcal{H}(\boldsymbol{a}_k, b_k)$, $k=1,\ldots,m$, are mutually distinct. Take the collection of lines $\{\mathcal{L}_j\}_{j=1}^{md}$ that is feasible with respect to $f_{\mathcal N}$. For any irreducible shallow ReLU network $f_{\mathcal N'}$ with $m$ neurons, if
$f_{\mathcal N}(\vx)=f_{\mathcal N'}(\vx)$ for all $\vx\in \cup_{j=1}^{md}\mathcal{L}_j$, then
${\mathcal N}\sim {\mathcal N}'$.
\end{lem}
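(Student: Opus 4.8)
The plan is to reconstruct, from the sampled function values, enough geometric and arithmetic data about $f_{\mathcal N}$ to pin down its parameters up to the equivalences listed in Theorem~\ref{prop:ambiguity}, and then to show that $f_{\mathcal N'}$ must share these parameters. Since both networks are irreducible with $m$ neurons and mutually distinct hyperplanes, Theorem~\ref{th:minm} guarantees we are exactly in the regime where Theorem~\ref{prop:ambiguity} applies (i.e.\ $\#K_1\le 2$; in fact the distinct-hyperplane hypothesis puts us in a clean case). The first step is to observe that, because $f_{\mathcal N}=f_{\mathcal N'}$ on each line $\mathcal L_j$ and a line meets the finite union $\bigcup_k\mathcal H(\va_k,b_k)$ in finitely many points, $f_{\mathcal N}$ restricted to $\mathcal L_j$ is a continuous piecewise-affine function of one variable whose breakpoints are exactly the $\vz_{j,k}$ that actually lie on $\mathcal L_j$; the same is true for $f_{\mathcal N'}$. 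Equality of the two restrictions forces equality of breakpoint sets and of the affine pieces. Hence the non-differentiability set of $f_{\mathcal N'}$, intersected with $\bigcup_j\mathcal L_j$, equals $\{\vz_{j,k}\}$. Using feasibility condition~(i) ($\mathrm{rank}=d$) together with condition~(iii), the collection $\{\vz_{j,k}\}$ contains, for each $k$, a set of $d$ points lying on $\mathcal H(\va_k,b_k)$ and spanning it as an affine hyperplane; the matching breakpoints of $f_{\mathcal N'}$ on the lines therefore lie on the hyperplanes $\mathcal H(\va'_{k'},b'_{k'})$, and a counting/closure argument (the closure of the non-differentiability set of a ReLU network is the union of its hyperplanes, as used repeatedly in Section~\ref{sec_min}) shows $\{\mathcal H(\va_k,b_k)\}_{k=1}^m=\{\mathcal H(\va'_{k'},b'_{k'})\}_{k'=1}^m$ as sets of hyperplanes. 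This yields the permutation $\pi$ and the scalars $\epsilon_k\lambda_k$ of \eqref{eq:eak1} at the level of hyperplanes.

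The second step upgrades this hyperplane-level matching to the parameter-level relations of Theorem~\ref{prop:ambiguity}. Fix $k$ and the two lines, say $\mathcal L_{j_1},\mathcal L_{j_2}$, crossing $\mathcal H(\va_k,b_k)$; comparing the \emph{jumps in slope} of $f_{\mathcal N}$ and $f_{\mathcal N'}$ at the corresponding breakpoint along these lines lets us read off $s_k\|\va_k\|$-type quantities. More precisely, along a line $\mathcal L_j=\{\vu_j+t\vv_j\}$ the neuron $s_k\sigma(\langle\va_k,\vx\rangle+b_k)$ contributes a kink of size $s_k\langle\va_k,\vv_j\rangle$ (with the appropriate sign convention) at the crossing; knowing these kink sizes for $d$ independent directions $\vv_j$ (guaranteed by feasibility~(i)) determines the vector $s_k\va_k$, hence $s_k$ and $\va_k$ individually once $\|\va_k\|$ is normalized, and then $b_k$ is fixed by the location of the crossing point. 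Carrying this out for $f_{\mathcal N'}$ and matching via $\pi$ gives $\epsilon_k\lambda_k(\va_k,b_k)=(\va'_{\pi(k)},b'_{\pi(k)})$ and $s_k/\lambda_k=s'_{\pi(k)}$. Finally, the constant $c$ (and $c'$) is recovered by evaluating the reconstructed affine formula on any one linear region against the sampled value there, which forces the relation \eqref{equiv_extra} tying $c'$ to $c$ via $\sum_{k\in K}s_kb_k$; since all three conditions of Theorem~\ref{prop:ambiguity} now hold, $\mathcal N\sim\mathcal N'$.

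The main obstacle I anticipate is the bookkeeping in Step~1–2 around the sign ambiguity $\epsilon_k\in\{-1,+1\}$ and the fact that two neurons may share a hyperplane via $(\va_{k_2},b_{k_2})=-\lambda(\va_{k_1},b_{k_1})$. A single line crossing such a shared hyperplane sees a kink whose size is a \emph{combination} of the two neurons' contributions (an $s_{k,1}$ and an $s_{k,2}$ in the notation of \eqref{eqn: fN_new}), so the slope-jump data on the lines do not by themselves separate the two co-hyperplanar neurons — this is precisely where the irreducibility of both $f_{\mathcal N}$ and $f_{\mathcal N'}$, via the structural constraints of Theorem~\ref{th:minm} (no admissible neuron-reduction, so the relevant $s_{k,i}$ combinations are pinned down), and the $(2m+2)$ rather than just $2$ points per line must be used: the extra breakpoints on auxiliary lines let one detect on \emph{which side} of the hyperplane the affine formula changes, thereby disentangling the two one-sided pieces $\sigma(\langle\va_k,\cdot\rangle+b_k)$ and $\sigma(\langle-\va_k,\cdot\rangle-b_k)$. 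Making this disentanglement rigorous — showing the feasible line configuration provides enough independent linear constraints to solve uniquely for every $s_{k,i}$, $\va_k$, $b_k$, $c$ even in the co-hyperplanar case — is the technical heart of the argument; once it is done, invoking Theorem~\ref{prop:ambiguity} closes the proof.
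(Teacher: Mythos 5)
Your outline follows the same two-step route as the paper's proof: first recover the breakpoints $\vz_{j,k}$ from the one-dimensional restrictions $f_{\mathcal N}|_{\mathcal L_j}$, use feasibility conditions (ii)--(iii) together with a pigeonhole count to show that the only $md$-point subsets lying on a common hyperplane single out exactly the $\mathcal H(\va_k,b_k)$, conclude that $f_{\mathcal N'}$ has the same hyperplane family, and then match coefficients along the lines and invoke Theorem~\ref{prop:ambiguity}. Your ``kink-matching'' in Step~2 is the same mechanism the paper packages as Lemma~\ref{independent_relu} applied to $(f_{\mathcal N}-f_{\mathcal N'})$ restricted to each $\mathcal L_j$.

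Two corrections. First, the ``technical heart'' you defer --- disentangling two neurons sharing a hyperplane --- does not arise here: the lemma assumes the $\mathcal H(\va_k,b_k)$ are mutually distinct, and since each $\mathcal L_j$ already carries $m$ distinct breakpoints that must also be breakpoints of $f_{\mathcal N'}$ while $f_{\mathcal N'}$ has only $m$ neurons, its hyperplanes are forced to be mutually distinct as well; there is no co-hyperplanar case to handle, so deferring it costs nothing. Second, the one substantive slip: the slope jump of $s_k\,\sigma(\langle\va_k,\cdot\rangle+b_k)$ along $\mathcal L_j$ equals $s_k\,|\langle\va_k,\vv_j\rangle|$, so kink data determines $s_k$ and $\pm\va_k$ (given the hyperplane) but \emph{not} the vector $s_k\va_k$; indeed $s_k\va_k$ cannot be recoverable, since the equivalence class in Theorem~\ref{prop:ambiguity} permits $\epsilon_k=-1$. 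Consequently condition (iii) of that theorem, namely $\sum_{k\in K}s_k\va_k=\boldsymbol 0$ and the formula for $c'$, cannot be extracted from ``one linear region'': after matching hyperplanes and kinks, the identity $\sigma(-x)=\sigma(x)-x$ reduces $f_{\mathcal N}-f_{\mathcal N'}$ to an affine function $\langle\vp,\vx\rangle+q+c'-c$ with $\vp=-\sum_{k\in K}s'_{\pi(k)}\va_k$, and one must use its vanishing on \emph{all} $md$ lines together with feasibility (i) (${\rm rank}[\vv_1,\ldots,\vv_{md}]=d$) to force $\vp=\boldsymbol 0$ and then the constant term to vanish. With that repair your argument is complete and coincides with the paper's.
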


\begin{proof}
Similarly to the proof of Theorem \ref{th:minm},  since 
\[
s\cdot \sigma(\langle \va,\vx\rangle+b) \equiv s\cdot \|\va\|_2\cdot \sigma(\langle \va/\|\va\|_2,\vx\rangle+b/\|\va\|_2),
\]
for any $\va\neq \boldsymbol{0}$ and $s,b\in \mathbb{R}$, we may assume that $\va_k,\va'_k \in \mathbb{S}^{d-1}$, for $k = 1, \ldots, m$.   The subsequent proof is divided into two steps:
 
   \textbf{Step 1: Establish  $\{\mathcal{H}(\va_{k}, b_{k})\}_{k=1}^{m} = \{\mathcal{H}(\va'_{k}, b'_{k})\}_{k=1}^{m}$. } 
   
   For each $j\in\{1,\ldots,md\}$, denote $\mathcal{L}_j:=\{\vu_j+t\cdot \vv_{j} :\ t\in \mathbb{R}\}$ for some $\vu_j,\vv_j\in \mathbb{R}^{d}$ and $f_{{\mathcal N},j}: \mathbb{R}\rightarrow \mathbb{R}$ with $f_{{\mathcal N},j}(t):=f_{\mathcal{N}}(\vu_{j}+t\cdot \vv_{j})$. Since $\{\mathcal{L}_j\}_{j=1}^{md}$
 is feasible with respect to $f_{\mathcal{N}}$  and  the number of neurons of $f_{\mathcal{N}}$ is $m$, based on condition (ii) in Definition \ref{def: feasible}, each $f_{{\mathcal N},j}$ is partitioned into $m+1$ intervals,  where each interval contains a linear function.  Therefore,  for each $j=1,\ldots,md$, we can utilize $f_{\mathcal{N},j}$  to identify the partition points $\vz_{j,k}$, for $k=1,\ldots,m$, where $\vz_{j,k}$ is defined in (\ref{eqn: z_k}).

We will demonstrate that the set of hyperplanes $\{\mathcal{H}(\va_{k}, b_{k})\}_{k=1}^{m}$
can be uniquely determined by identifying those hyperplanes that contain $md$ elements from the set 
\[
\{\vz_{j,k}\}_{1 \leq j \leq md, 1 \leq k \leq m}.
\]
 To this end, we propose the following claim:
For any subset $\mathcal{Z}_0 \subseteq \{{\vz}_{j,k}\}_{1 \leq j \leq md, 1 \leq k\leq m}$  with cardinality $\#\mathcal{Z}_0 = md$, one of the following two conditions must hold: 
either 
\begin{equation}
\label{eqn: necessary_1}
\mathcal{Z}_0\subseteq \mathcal{H}(\va_{k_0},b_{k_0})\quad  \text{for some}\  k_0\in \{1,\ldots,m\}
\end{equation}
 or 
 \begin{equation}
 \label{eqn: necessary_2}
 \mathcal{Z}_0\not\subseteq \mathcal{H}(\va,b)\quad \text{for any} \ \va\in \mathbb{R}^{d}\setminus \{\boldsymbol 0\}\ \text{and}\ b\in \mathbb{R}.
 \end{equation} 
Indeed, for each $k=1,\ldots,m$, the hyperplane $\mathcal{H}(\va_{k}, b_{k})$ contains precisely $md$ points from the set $\{\vz_{j,k}\}_{1 \leq j \leq md, 1 \leq k \leq m}$. As a result, the collection of hyperplanes $\{\mathcal{H}(\va_{k}, b_{k})\}_{k=1}^{m}$ is uniquely determined by this point set.

 To demonstrate that either (\ref{eqn: necessary_1}) or (\ref{eqn: necessary_2}) must hold, we proceed as follows:
For each $j \in\{1, \ldots, md\}$, according to condition (ii) in Definition \ref{def: feasible}, there exists a one-to-one correspondence between $\{{\vz}_{j,k}\}_{k=1}^m$ and $\{\mathcal{H}(\va_k, b_k)\}_{k=1}^m$. Since $\#\mathcal{Z}_0 = md$, based on {the pigeonhole principle}, there exists an index $k_0 \in \{1, \ldots, m\}$ such that $\mathcal{H}(\va_{k_0}, b_{k_0})$ contains at least $d$ elements in $\mathcal{Z}_0$.
If $\mathcal{Z}_0 \subseteq \mathcal{H}(\va_{k_0}, b_{k_0})$, then the result in (\ref{eqn: necessary_1}) is satisfied. 
Otherwise, we can take $\mathcal{Z}_1 \subseteq \mathcal{Z}_0 \cap \mathcal{H}(\va_{k_0}, b_{k_0})$ 
and $\mathcal{Z}_2 \subseteq \mathcal{Z}_0 \setminus \mathcal{H}(\va_{k_0}, b_{k_0})$ such that  $\#\mathcal{Z}_1 = d$  and  $\#\mathcal{Z}_2 = 1$.
According to condition (iii) in Definition \ref{def: feasible}, if there exists a hyperplane $\mathcal{H}(\va,b) \subset \mathbb{R}^d$ such that $\mathcal{Z}_1 \subseteq \mathcal{H}(\va,b)$, then $\mathcal{H}(\va,b)$  must coincide with $\mathcal{H}(\va_{k_0}, b_{k_0})$. Consequently, given the definition of $\mathcal{Z}_2$,  (\ref{eqn: necessary_2}) is established.

To determine the family of hyperplanes $\{\mathcal{H}(\va'_{k}, b'_{k})\}_{k=1}^{m}$, we can employ an approach analogous to the one used earlier: denote the  intersection points between the lines $\{\mathcal{L}_{j}\}_{j=1}^{md}$ and the hyperplanes $\{\mathcal{H}(\va'_k, b'_k)\}_{k=1}^m$ as  $\{\vz'_{j,k}\}_{1\leq j\leq md, 1\leq k\leq m}$.
Given that $f_{\mathcal{N}}(\vx) = f_{\mathcal{N}'}(\vx)$ for all $\vx\in \cup_{j=1}^{md}\mathcal{L}_j$, and based on condition  {\rm (ii)} in Definition \ref{def: feasible} in conjunction with the presence of $m$ neurons in $f_{\mathcal{N}'}$, we can readily conclude that
\[
\{\vz_{j,k}\}_{1\leq j\leq md, 1\leq k\leq m}=\{\vz'_{j,k}\}_{1\leq j\leq md, 1\leq k\leq m}.
\]

Following a similar line of reasoning as in the preceding analysis (which is not explicitly shown here),
we can uniquely determine $\{\mathcal{H}(\va'_{k}, b'_{k})\}_{k=1}^{m}$ based on the set $\{{\vz}'_{j,k}\}_{1 \leq j \leq md, 1 \leq k \leq m}$. Consequently, since $\{\vz_{j,k}\}_{1\leq j\leq md, 1\leq k\leq m}=\{\vz'_{j,k}\}_{1\leq j\leq md, 1\leq k\leq m}$, it follows that 
\[
\{\mathcal{H}(\va_{k}, b_{k})\}_{k=1}^{m} = \{\mathcal{H}(\va'_{k}, b'_{k})\}_{k=1}^{m}.
\]

\textbf{Step 2: Establishing the Equivalence Relation $\mathcal{N} \sim \mathcal{N}'$.}

Building upon the conclusion in \textbf{Step 1}, and noting that $\va_k, \va'_k \in \mathbb{S}^{d-1}$, $k = 1, \ldots, m$, the pairs $(\va_k,b_k),(\va'_k,b'_k)\in \mathbb{R}^{d}\times \mathbb{R}$, $k=1,\ldots,m$, satisfy $\{\epsilon_k\cdot (\va_k,b_k)\}_{k=1}^{m}=\{({\va'_k},b'_k)\}_{k=1}^{m}$ for some  $\epsilon_k\in\{-1,1\}$, $k=1,\ldots,m$.  Without loss of generality, take $\va'_k=\epsilon_k  \va_k$ and $b'_k=\epsilon_k  b_k$. Then the network $f_{\mathcal{N}'}$ can be expressed as:
\begin{equation}\label{eqn: f_new}
\begin{aligned}
f_{\mathcal{N}'}(\vx):=&\sum_{k=1}^m s'_k \cdot \sigma(\langle \epsilon_k \cdot  \va_k,\vx\rangle+\epsilon_k  \cdot b_k)+c'\\
=&\sum_{k\notin K}s'_k \cdot \sigma(\langle \va_k,\vx\rangle+b_k)+\sum_{k\in K}s'_k\cdot  \sigma(-\langle \va_k,\vx\rangle-b_k)+c'\\
=&\sum_{k=1}^{m} s'_k\cdot  \sigma(\langle \va_k,\vx\rangle+b_k)-\sum_{k\in K}s'_k\cdot  (\langle \va_k,\vx\rangle+b_k)+c'\\
=&\sum_{k=1}^{m} s'_k\cdot  \sigma(\langle \va_k,\vx\rangle+b_k)+\langle \vp,\vx \rangle+q+c',
\end{aligned}
\end{equation}
where $K:=\{k:\epsilon_k=-1, 1\leq k\leq m\}$,  $\vp:=-\sum_{k\in K}s'_k \cdot \va_k$ and $q:=-\sum_{k\in K}s'_k \cdot  b_k$.
  Here, the last equality above follows from the identity $\sigma(-x)\equiv\sigma(x)-x$. 
For each $j\in \{1,\ldots,md\}$, since $f_{\mathcal{N}}(\vx)=f_{\mathcal{N}'}(\vx)$ for all $\vx\in \mathcal{L}_j$ we have
 \begin{equation}\label{eq:Ldeng}
f_{\mathcal{N}}(\vu_{j}+t\cdot \vv_{j}) = f_{\mathcal{N}'}(\vu_{j}+t\cdot \vv_{j})\quad \text{for}\ \text{all}\ t\in \mathbb{R},
\end{equation}
 which implies
 \begin{equation}\label{eqn: lemma_apply}
 \sum_{k=1}^{m}(s'_k-s_k)\cdot \sigma(\alpha_{j,k}  \cdot t+\beta_{j,k})+d_{j} \cdot  t+c_j=0,\quad \text{for}\ \text{all}\ t\in \mathbb{R}.
 \end{equation}
 Here, for $j=1,\ldots,md$ and $k=1,\ldots,m$, 
\[
\alpha_{j,k}:=\langle \va_k,\vv_j\rangle,\  \beta_{j,k}:=\langle \va_k,\vu_j\rangle+b_k, \ d_j:=\langle \vp,\vv_j\rangle\  \text{and} \ c_j:=\langle \vp,\vu_j\rangle+q+c'-c.
\]
 
From condition (ii) in Definition \ref{def: feasible}, the hyperplanes (collapsing to points) 
\[
\mathcal{H}(\alpha_{j,1},\beta_{j,1}),\ldots,\mathcal{H}(\alpha_{j,m},\beta_{j,m})
\]
 are distinct. Applying Lemma \ref{independent_relu} to (\ref{eqn: lemma_apply}), we immediately obtain 
 \[
 s'_k=s_k,\quad k=1,\ldots,m,\quad \text{and}\quad d_j=c_j=0,\quad j=1,\ldots,md.
 \]

  Given that 
  \[
  \langle \vp,\vv_j\rangle=d_j=0, \quad j=1,\ldots, md,
  \]
 we can conclude that $\vp=\boldsymbol{0}$. 
 Here, we use condition (i) in Definition \ref{def: feasible}, which states that ${\rm rank}([\boldsymbol{v}_1, \ldots, \boldsymbol{v}_{md}])=d$.
 Hence, we have 
 \begin{equation}\label{eqn: temp_main1}
 -\vp=\sum_{k\in K}s'_k\cdot  \va_k= \sum_{k\in K}s_k\cdot  \va_k=\boldsymbol{0}. 
 \end{equation}
 Meanwhile, since 
 \[
 0=c_j=\langle \vp,\vu_j\rangle+q+c'-c=q+c'-c,\quad j=1,\ldots,md,
 \]
   we obtain 
 \begin{equation}\label{eqn: temp_main2}
c'=c-q=c+\sum_{k\in K}s'_k \cdot  b_k=c+\sum_{k\in K}s_k  \cdot b_k,
 \end{equation}
with $s'_k=s_k$, $\va'_k=\epsilon_k\cdot  \va_k$, and $b'_k=\epsilon_k \cdot b_k$
for $k=1,\ldots,m$, and using equations (\ref{eqn: temp_main1}) and (\ref{eqn: temp_main2}), we can apply Theorem \ref{prop:ambiguity} to conclude that $\mathcal{N} \sim \mathcal{N}'$. This completes the proof.
\end{proof}

 \begin{figure}[!t]
	\centering
	\includegraphics[scale=0.24]{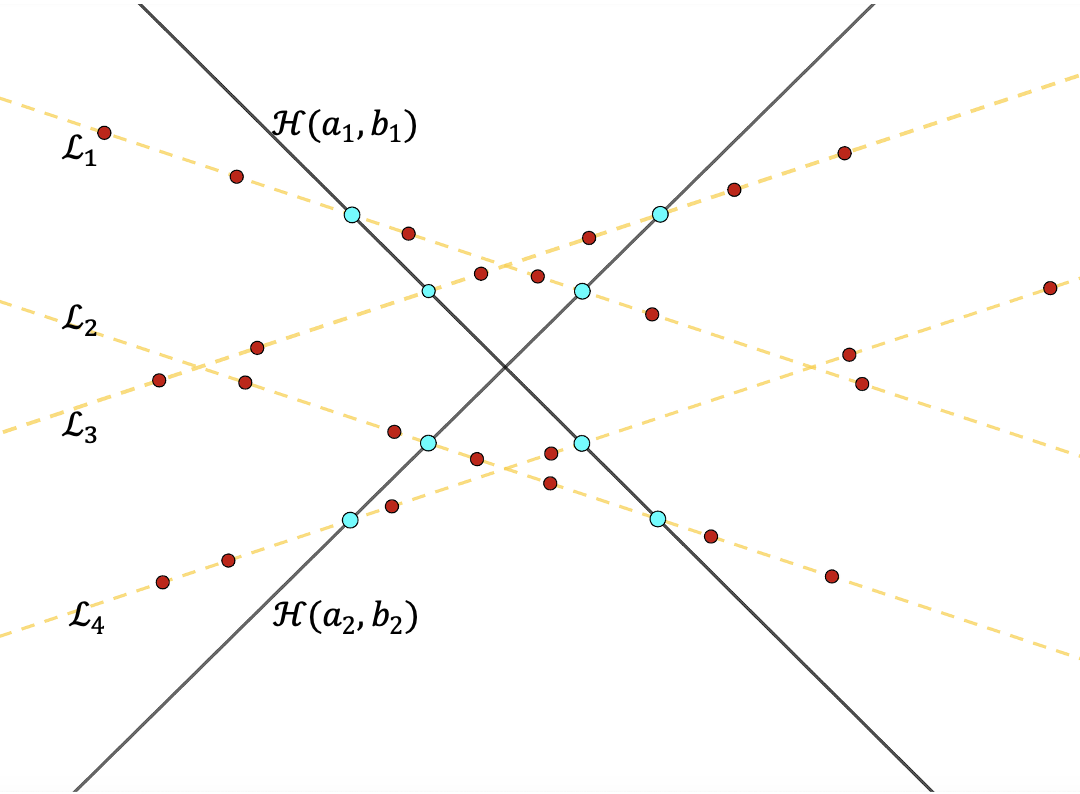}
	\caption{\small
Uniquely determining parameters ${\mathcal N}$ using $f_{\mathcal N}$
  values at specific sampling points.
 Here $f_{\mathcal N}(\vx)=\sigma(\innerp{\va_1,\vx}+b_1)+\sigma(\innerp{\va_2,\vx}+b_2)$
where $\va_1=(1,1)^\top, b_1=0$ , $\va_2=(1,-1)^\top, b_2=0$, $m=2$, and $d=2$.
Red points: Sampling points from the proof of Theorem \ref{th:samplingc}, used to determine $f_{\mathcal N}$
  values along lines ${\mathcal L}_j, j=1,\ldots,4$.
Blue points: Intersections of ${\mathcal L}_j$  with ${\mathcal H}(\va_1,b_1)$
  and ${\mathcal H}(\va_2,b_2)$, derived from $f_{\mathcal N}$
  values on ${\mathcal L}_j, j=1,\ldots,4$. These intersections uniquely determine ${\mathcal H}(\va_1, b_1)$, ${\mathcal H}(\va_2, b_2)$, and consequently, ${\mathcal N}$.}
	\label{fig:ps}
\end{figure}

We next present the proof of Theorem \ref{th:samplingc}.
To elucidate the steps of this proof, Figure \ref{fig:ps} presents a concrete example in $\R^2$, visualizing the sampling points and their role in parameter determination.

\begin{proof}[Proof of Theorem \ref{th:samplingc}]
Here we  present a systematic procedure for constructing the point set $\mathcal{X}$:
\begin{enumerate}[(i)]
\item
 Take  the collection of lines $\{\mathcal{L}_j\}_{j=1}^{md}$, which is feasible with respect to $f_{\mathcal{N}}$ (see Definition \ref{def: feasible}).  Assume that  $\mathcal{L}_j:=\{\vu_{j}+t \cdot \vv_{j}\ :\ t\in \mathbb{R}\}$ for some $\vu_j,\vv_j\in \mathbb{R}^{d}$, $j=1,\ldots,md$.    Without loss of generality, for each $j\in\{1,\ldots,md\}$, take a permutation $\pi_j$ of $\{1,\ldots,m\}$ such that 
{ \[
 {\vz}_{j,\pi_j(k)}:=\vu_{j}+w_{j,k}\cdot\vv_{j}\in \mathcal{L}_{j}\cap {\mathcal{H}}(\va_{\pi_j(k)},b_{\pi_j(k)}),\quad k=1,\ldots,m,
 \]  }
 where $w_{j,k}$,  $k=1,\ldots,m$, are ordered as:
 \begin{equation}\label{eqn: w}
w_{j,0}:=-\infty< {w}_{j,1}< \cdots<{w}_{j,m}<w_{j,m+1}:=+\infty.
 \end{equation}
\item Define the point set ${\mathcal{X}}$ as follows:
\[
{\mathcal{X}}:= \{\vx_{j,l}\}_{1 \leq j \leq md, 1 \leq l \leq 2m+2}\subseteq \R^d,
\]
where the points $\vx_{j,l}$  satisfy the following conditions:
 \begin{enumerate}[(a)]
 \item for each $j \in \{1,\ldots,md\}$, let ${\vx}_{j,l}:=\vu_{j}+{t}_{j,l}\cdot \vv_{j}\in \mathcal{L}_j$, $l=1,\ldots,2m+2$, such that $t_{j,2k-1}$ and $t_{j,2k}$ are distinct and $t_{j,2k-1},t_{j,2k}\in (w_{j,k-1},w_{j,k})$,  $k=1,\ldots,m+1$;
 
 \item  for any three distinct points in the set $\mathcal{X}$, if they are collinear, let $\mathcal{L}_0$
  denote the line containing these points. Then, $\mathcal{L}_0$
  must coincide with one of the predefined lines $\mathcal{L}_j$, where $j \in \{1, \ldots, md\}$.

 \end{enumerate}
\end{enumerate}
 Drawing from Remark \ref{rem: key_rem_correct} and the definition of $\mathcal{X}$, we can assert that $\mathcal{X}$ can be successfully constructed for almost all choices of $\{\vu_j\}_{1\leq j\leq md}, \{\vv_j\}_{1\leq j\leq md} \subseteq \mathbb{R}^{d}$
and $t_{j,2k-1},t_{j,2k}\in (w_{j,k-1},w_{j,k})$, for $j=1,\ldots,md$ and $k=1,\ldots,m+1$,
where $w_{j,k}$  is defined as in  (\ref{eqn: w}).

Based on the construction procedure (ii) and condition (b) outlined above, the only allowable set of $2m + 2$ collinear points in $\mathcal{X}$ must be $\{\vx_{j,l}\}_{l=1}^{2m+2}$, $j=1, \ldots, md$, which uniquely determines the set of lines $\{\mathcal{L}_j\}_{j=1}^{md}$ from the point set $\mathcal{X}$.
 For $j\in \{1,\ldots,md\}$, set $f_{{\mathcal N},j}: \mathbb{R}\rightarrow \mathbb{R}$ with $f_{{\mathcal N},j}(t):=f_{\mathcal{N}}(\vu_{j}+t\cdot \vv_{j})$.
 Since $\{\mathcal{L}_j\}_{j=1}^{md}$ is feasible with respect to $f_{\mathcal{N}}$, and the number of neurons in $f_{\mathcal{N}}$ is $m$,  $f_{\mathcal{N},j}$  can be divided into $m + 1$ intervals, each containing a linear function.  Applying condition (ii) in Definition \ref{def: feasible} along with construction procedure (ii) and condition (a) above, we conclude that $\{(\vx_{j,l}, f_{\mathcal N}(\vx_{j,l}))\}_{l=1}^{2m+2}$ uniquely determines all the function values of $f_{\mathcal{N}}$ along the line $\mathcal{L}_j$, for each $j \in\{1, \ldots, md\}$. Therefore, for any other network $f_{\mathcal{N}'}$, if $f_{\mathcal{N}'}(\vx) = f_{\mathcal{N}}(\vx)$ for all $\vx \in \mathcal{X}$, we immediately conclude that $f_{\mathcal{N}'}(\vx) = f_{\mathcal{N}}(\vx)$ for all $\vx \in \bigcup_{j=1}^{md} \mathcal{L}_j$. By applying Lemma \ref{lem: tec_hyperplane}, we can further deduce that $\mathcal{N}' \sim \mathcal{N}$, thus completing the proof.

\end{proof}

 \section{Proof of Theorem \ref{th:gsamp}}\label{sec: sample_analytic}
 The following lemma is presented in  \cite[Lemma 1]{sussmann}. For the sake of convenience, we provide a straightforward alternative proof here.

\begin{lem}\cite[Lemma 1]{sussmann}\label{key_lemma} 
For each $j \in\{1, \ldots, m\}$, define $f_j(x) := \frac{1}{1 + \exp(-a_j \cdot x - b_j)}$ for some $a_j, b_j \in \mathbb{R}$. We assume that $(a_{j_1}, b_{j_1}) \neq \pm (a_{j_2}, b_{j_2})$ for $j_1 \neq j_2$, and that $a_j \neq 0$ for all $j\in \{1, \ldots, m\}$. Let $c_0, c_1, \ldots, c_m \in \mathbb{R}$. If $c_1\cdot f_1(x) + c_2 \cdot f_2(x) + \ldots + c_m \cdot f_m(x) + c_0 = 0$ holds for all $x \in \mathbb{R}$, then it follows that $c_0 = c_1 = c_2 = \cdots = c_m = 0.$
\end{lem}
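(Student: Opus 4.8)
The plan is to prove the lemma by meromorphic continuation together with a residue count, organized as an induction on $m$. The case $m=0$ is immediate ($c_0=0$ identically), so suppose $m\ge 1$ and $\sum_{j=1}^m c_j f_j(x)+c_0=0$ for all $x\in\R$. First I would pass to the complex plane. For each $j$ the denominator $1+\exp(-a_jz-b_j)$ is entire and takes values $\ge 1$ on $\R$, so $f_j$ is meromorphic on $\C$; its poles are exactly the points $z$ with $a_jz+b_j\in i\pi(2\Z+1)$, and since the derivative of the denominator is $-a_j\exp(-a_jz-b_j)$, which equals $a_j\ne 0$ at each such point, every pole of $f_j$ is simple. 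Consequently $F(z):=\sum_{j=1}^m c_j f_j(z)+c_0$ is meromorphic on $\C$ and vanishes on the real line, hence $F\equiv 0$ on $\C$ by the identity theorem; in particular $F$ is holomorphic everywhere.

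The crucial step is to exhibit a pole of one of the $f_j$ that belongs to no other $f_{j'}$. I would pick an index $j_\ast$ with $|a_{j_\ast}|$ maximal among $j=1,\dots,m$, and take the pole $z_0$ of $f_{j_\ast}$ determined by $a_{j_\ast}z_0+b_{j_\ast}=i\pi$. For $j\ne j_\ast$ a short computation gives $a_jz_0+b_j=\bigl(b_j-(a_j/a_{j_\ast})b_{j_\ast}\bigr)+(a_j/a_{j_\ast})\,i\pi$. If $(a_j,b_j)$ is not a scalar multiple of $(a_{j_\ast},b_{j_\ast})$, this number has nonzero real part and so cannot lie in $i\pi(2\Z+1)$; hence $z_0$ is not a pole of $f_j$. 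If instead $(a_j,b_j)=\lambda(a_{j_\ast},b_{j_\ast})$ with $\lambda:=a_j/a_{j_\ast}$, then $\lambda\ne\pm1$ by the standing hypothesis $(a_j,b_j)\ne\pm(a_{j_\ast},b_{j_\ast})$, while $|\lambda|\le 1$ by maximality of $|a_{j_\ast}|$, so $0<|\lambda|<1$; thus $a_jz_0+b_j=i\pi\lambda\notin i\pi(2\Z+1)$ because $\lambda$ is not a nonzero odd integer. Therefore $z_0$ is a pole of $f_{j_\ast}$ and of no other $f_j$. Since $F\equiv 0$ is holomorphic at $z_0$, and every term $c_jf_j$ with $j\ne j_\ast$ together with the constant $c_0$ is holomorphic at $z_0$, the remaining term $c_{j_\ast}f_{j_\ast}$ must also be holomorphic at $z_0$; as $f_{j_\ast}$ genuinely has a pole there, this forces $c_{j_\ast}=0$.

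Deleting the $j_\ast$-th summand leaves $\sum_{j\ne j_\ast}c_jf_j+c_0\equiv 0$ on $\R$, an identity of the same form with $m-1$ functions still satisfying $a_j\ne 0$ and pairwise $(a_{j_1},b_{j_1})\ne\pm(a_{j_2},b_{j_2})$, so the induction hypothesis yields $c_0=0$ and $c_j=0$ for all $j\ne j_\ast$, which together with $c_{j_\ast}=0$ completes the argument.

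I expect the only subtle point to be the proportional-parameter case: when the slopes are commensurable (for instance $a_j=3a_{j_\ast}$, $b_j=3b_{j_\ast}$) the functions $f_j$ and $f_{j_\ast}$ actually share infinitely many poles, so an arbitrary pole of an arbitrary $f_j$ need not be isolated among the summands. This is precisely why the argument must select the pole $z_0$ nearest the real axis of the function with the \emph{largest} $|a_j|$: maximality of $|a_{j_\ast}|$ is exactly what rules out $|\lambda|\ge 1$ and hence makes $z_0$ unshared. Everything else — the meromorphic continuation, the simple-pole residue fact, and the bookkeeping of which summand is singular at $z_0$ — is routine.
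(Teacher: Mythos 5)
Your proof is correct, but it takes a genuinely different route from the one in the paper. You argue by meromorphic continuation: each $f_j$ extends to a meromorphic function whose poles are the simple poles at $a_jz+b_j\in i\pi(2\Z+1)$, the identity propagates from $\R$ to $\C$, and you kill $c_{j_\ast}$ by exhibiting an \emph{unshared} pole of $f_{j_\ast}$. Your key device --- taking $|a_{j_\ast}|$ maximal and the pole closest to the real axis, so that any proportional pair $(a_j,b_j)=\lambda(a_{j_\ast},b_{j_\ast})$ forces $0<|\lambda|<1$ and hence $\lambda i\pi\notin i\pi(2\Z+1)$ --- correctly disposes of the commensurable-slope case, which is indeed the only delicate point; the non-proportional case is handled by the nonzero real part of $a_jz_0+b_j$, and the residue $1/a_{j_\ast}\neq 0$ guarantees the pole of $f_{j_\ast}$ is genuine. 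This is essentially the classical complex-analytic argument in the spirit of Sussmann's original paper. The paper instead gives a deliberately real-variable ``alternative proof'': after normalizing $a_j>0$, it expands each $f_j$ as a Dirichlet-type series $\sum_k(-1)^{k+1}e^{-k(a_jx+b_j)}$ for large $x$, isolates, via a number-theoretic argument with primes, exponents $p_t\alpha_0$ that can only arise from the summands with the smallest slope $\alpha_0$, and then invokes the nonsingularity of a Chebyshev system $\{e^{-p_tx}\}$ to conclude those coefficients vanish, inducting on the number of distinct slopes. Your approach is shorter and avoids both the asymptotic bookkeeping and the prime-based lemma, at the cost of importing the identity theorem and pole analysis from complex function theory; the paper's approach stays entirely on the real line but needs the Chebyshev-system fact and the arithmetic of slope multiples. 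Both are complete proofs of the lemma.
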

\begin{proof}

For each $j\in \{1,\ldots,m\}$, if $a_{j} < 0$, we can replace $f_{j}(x)$ with $1 - f_{j}(-x)$.
 Hence, without loss of generality, we assume that $a_j>0$,  for all $j\in\{1,\ldots,m\}$.

A key observation is that $\lim_{x \to +\infty} f_j(x) = 0$
 for all $j \in \{1, \ldots, m\}$, which necessarily implies that $c_0 = 0$. Consequently, it suffices to demonstrate that $c_1 = c_2  =  \cdots = c_m = 0$.

We prove the statement by induction on $m$. For the base case, when $m=1$, the conclusion holds. Assuming the inductive hypothesis that the conclusion holds for all $m \leq n-1$, we now consider the case where $m=n$. Without loss of generality, we assume that 
\[
\alpha_0:=a_1=a_2= \cdots=a_{n_0}<a_{n_0+1}\leq a_{n_0+2}\leq  \cdots \leq a_n.
\]
A simple observation  is that when $x$ is large enough, 
\begin{equation}\label{eq:zhan}
f_j(x)=1-\sum_{k=1}^\infty (-1)^k\cdot \exp(-k\cdot (a_j\cdot x+b_j)), \qquad j=1,\ldots,n.
\end{equation}
The series of functions on the right-hand side converges uniformly to $f_j(x)$ when $x$ is sufficiently large.
We still use $f_j$ to denote the infinite function series.
Set $A:=\cup_{j=1}^n\{k\cdot a_j:  k\in \mathbb{Z}_{+}\}$. Then for any $\alpha \in A$ with ${\alpha}\leq M$
the coefficient of $\exp(-\alpha \cdot x)$ in the series expansion of  $\sum_{j=1}^n c_j\cdot f_j(x)$  is $0$. 
Here, $M$ represents a sufficiently large constant, which will be determined later.
Given that $\alpha \leq M$, to determine the coefficient of $\exp(-\alpha\cdot  x)$, it suffices to consider a finite truncation of each $f_j$.
We assert the existence of an increasing positive integer sequence $(p_t)_{t=1}^\infty$
  satisfying the following condition:
\begin{equation}\label{eq:jiaokong}
\{p_t\cdot \alpha_0:t\in {\mathbb Z}_{\geq 1}\}\cap  (\cup_{j=n_0+1}^n \{k\cdot a_j:  k\in \mathbb{Z}_{+}\})=\emptyset.
\end{equation}
This ensures that the coefficient of $\exp(-p_t\cdot \alpha_0\cdot x)$ in the series expansion of $\sum_{j=1}^n c_j\cdot f_j(x)$ is precisely $-(-1)^{p_t}\cdot \sum_{j=1}^{n_0}c_j\cdot \exp(-p_t \cdot b_j)$.
Then we have
\begin{equation}\label{eq:xiangdeng}
\sum_{j=1}^{n_0}c_j\cdot \exp(-p_{t}\cdot b_j)=0,\qquad t=1,\ldots,n_0.
\end{equation}
At this juncture, we can precisely define our choice of $M$: we stipulate that $M$ be any value satisfying the inequality $M > p_{n_0}\cdot \alpha_0$. According to (\ref{eq:xiangdeng}), we find that $c_1 = \cdots = c_{n_0} = 0$. To prove this, we utilize the fact that $(a_{j_1}, b_{j_1}) \neq (a_{j_2}, b_{j_2})$ whenever $j_1 \neq j_2$, along with the nonsingularity of the coefficient matrix of equation (\ref{eq:xiangdeng}). This property arises from the fact that $\{\exp(-p_t \cdot x)\}_{t=1}^{n_0}$  forms a Chebyshev system \cite{chebyshev}. 
By applying the inductive hypothesis, we can deduce the desired result.

It remains to establish the validity of (\ref{eq:jiaokong}). We proceed by contradiction. Suppose, contrary to our claim, that there exists $N_0 > 0$
  such that
\begin{equation*}
\{k\cdot \alpha_0: k\geq N_0\}\subseteq  \left(\cup_{j=n_0+1}^n \{k\cdot a_j:  k\in \mathbb{Z}_{+}\}\right).
\end{equation*}

  A fundamental theorem in number theory asserts the existence of infinitely many prime numbers exceeding any given positive integer. 
  Applying this to our context, and noting that $n$ is finite while there are infinitely many primes, we can, without loss of generality, select two distinct primes $q_1$
  and $q_2$, both greater than $N_0$, and an index $j_0\in \{n_0+1,\ldots,n\}$ satisfying the equations:
\begin{equation*}
 q_1\cdot \alpha_0 = q_1'\cdot a_{j_0} \quad \text{and} \quad q_2\cdot \alpha_0 = q_2'\cdot a_{j_0}
 \end{equation*}
for some positive integers $q_1'$  and $q_2'$. This leads to the equality $q_1'/q_1 = q_2'/q_2$. Therefore, we have $q_1' \geq q_1$  and $q_2' \geq q_2$ since $q_1$ and $q_2$ are primes and mutually distinct. However, given that $\alpha_0 < a_{j_0}$, we can deduce that $q_1' < q_1$ and $q_2' < q_2$. This results in a contradiction.
\end{proof}

Based on Lemma \ref{key_lemma}, we establish the following lemma, which plays a pivotal role in the proof of Theorem \ref{th:gsamp}.

\begin{lem}
\label{lem: sampling}
Take $f:\mathbb{R}\rightarrow \mathbb{R}$ as  
\[
f(x) := \sum_{k=1}^{n} s_k\cdot   \sigma(a_k \cdot  x+b_k ) +s_0,\]
 where  $\sigma(x)=\mathrm{Sigmoid}(x)$ or $\sigma(x)=\tanh(x)$.  Assume that the pairs $(a_k,b_k)\in \mathbb{R}\times \mathbb{R}$, $k=1,\ldots,n$, are mutually distinct, and $a_k\neq 0$, $k=1,\ldots,n$. 
 Then if $f$ possesses $2^n$ distinct zero points, it follows that $s_0 = s_1=\cdots=s_n= 0$.
\end{lem}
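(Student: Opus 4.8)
The plan is to convert the statement into a bound on the number of real zeros of an \emph{exponential polynomial}. First I would dispose of the case $\sigma=\tanh$: by \eqref{eqn: tanh_sigmoid} one may rewrite $f(x)=\sum_{k=1}^{n}(2s_k)\cdot\mathrm{Sigmoid}(2a_k x+2b_k)+\big(s_0-\sum_{k=1}^{n}s_k\big)$, where the pairs $(2a_k,2b_k)$ are still mutually distinct with $2a_k\neq0$, and where $s_0=s_1=\cdots=s_n=0$ is equivalent to the vanishing of the new constant together with $2s_1=\cdots=2s_n=0$. So it suffices to treat $\sigma=\mathrm{Sigmoid}$. Putting $c_k:=e^{-b_k}>0$ we have $\sigma(a_k x+b_k)=(1+c_k e^{-a_k x})^{-1}$; define $P(x):=\prod_{k=1}^{n}(1+c_k e^{-a_k x})$ and $g:=P\cdot f$. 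Since every factor of $P$ is strictly positive on $\mathbb{R}$, $P$ has no real zero, so $f$ and $g$ have exactly the same zero set; in particular $g$ has at least $2^n$ distinct zeros.

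Next I would note that $g$ is an exponential polynomial. Expanding,
\[
g(x)=\sum_{k=1}^{n}s_k\prod_{j\neq k}\big(1+c_j e^{-a_j x}\big)+s_0\prod_{j=1}^{n}\big(1+c_j e^{-a_j x}\big),
\]
which is a finite real linear combination of the functions $x\mapsto e^{-(\sum_{j\in S}a_j)x}$ over subsets $S\subseteq\{1,\ldots,n\}$; collecting equal exponents gives $g(x)=\sum_{i=1}^{N}\gamma_i e^{\mu_i x}$ with $\mu_1<\cdots<\mu_N$ real and $N\leq 2^n$.

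The core of the proof is the classical fact that a nonzero exponential polynomial with $N$ terms has at most $N-1$ real zeros, which I would prove by induction on $N$ via Rolle's theorem: factoring out the nowhere-vanishing $e^{\mu_1 x}$, the zeros of $g$ coincide with those of $q(x):=\gamma_1+\sum_{i=2}^{N}\gamma_i e^{(\mu_i-\mu_1)x}$; then $q'$ is an exponential polynomial with the $N-1$ nonzero distinct exponents $\mu_i-\mu_1$, so by induction $q'$ has at most $N-2$ zeros and hence $q$ has at most $N-1$, the case $N=1$ being trivial. Since $g$ has at least $2^n\geq N$ distinct zeros, this forces $g\equiv0$, and as $P$ never vanishes we obtain $f\equiv0$. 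Applying Lemma~\ref{key_lemma} to $f\equiv0$ then yields $s_0=s_1=\cdots=s_n=0$.

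The main obstacle is the zero-counting inequality for exponential sums; it is standard (a generalized Descartes rule / Chebyshev-system statement), but the Rolle induction must be set up with care, and one must observe that merging coincident exponents only decreases $N$ — e.g. $\sum_{j\in S}a_j$ may collide for different $S$ — so the bound $2^n-1$ is always respected and the argument is unaffected.
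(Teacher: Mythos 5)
Your proof is correct and follows essentially the same route as the paper: reduce $\tanh$ to $\mathrm{Sigmoid}$, multiply $f$ by $\prod_{k}\bigl(1+e^{-(a_k x+b_k)}\bigr)$ to obtain an exponential sum with at most $2^n$ terms sharing the zero set of $f$, conclude $f\equiv 0$ from the zero count, and finish by invoking Lemma \ref{key_lemma}. The only difference is that you supply a self-contained Rolle-induction proof of the bound ``an exponential sum with $N$ terms has at most $N-1$ real zeros,'' whereas the paper simply cites the fact that $\{e^{-\alpha x}\}_{\alpha\in A}$ forms a Chebyshev system.
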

\begin{proof}
If $\sigma(x)=\tanh(x)$, owing to the identity $\tanh(x) \equiv 2\text{Sigmoid}(2x) - 1$, $f$ can be rewritten as:
\begin{equation}\label{eqn: g_new}
f(x)=\sum_{k=1}^n 2s_k \cdot \text{Sigmoid}\left(2a_k \cdot x+2b_k\right)+s_0-\sum_{k=1}^n s_k.
\end{equation}
Since  the pairs $(a_k,b_k)$, $k=1,\ldots,n$, are mutually distinct,   it follows that  the pairs $(2a_k,2b_k)$, $k=1,\ldots,n$, are also mutually distinct. Consequently, when the conclusion holds for $\sigma(x) = \text{Sigmoid}(x)$, we can deduce that $2s_k = 0$, for $k = 1, \ldots, n$, and $s_0 - \sum_{k=1}^n {s_k}= 0$ based on the expression (\ref{eqn: g_new}). This further implies that $s_k = 0$, for $k = 0, \ldots, n$. Therefore, we need only focus on the case where $\sigma(x) = \text{Sigmoid}(x)$. 
  
   Take $h(x):=f(x) \cdot  \prod_{k=1}^n (1+\exp(-(a_{k} \cdot x+b_k)))$, and then $h$ can also be expressed as 
 \begin{equation}\label{eqn: h}
\begin{aligned}
h(x)=&\sum_{k=1}^n s_k \cdot \prod_{j\neq k}(1+\exp(-(a_{j} \cdot x+b_j)))+s_0 \cdot  \prod_{k=1}^n (1+\exp(-(a_{k} \cdot  x+b_k))).
\end{aligned}
\end{equation}
Since $\prod_{k=1}^n (1+\exp(-(a_{k}\cdot  x+b_k)))>0$  for all $x \in \mathbb{R}$, the zero sets of $f$ and $h$ are identical.

Set
\begin{equation}\label{eqn: A}
{A}:=\Big\{\sum_{k\in K} {a_k}\ : \ K\subseteq\{1,\ldots,n\}\Big\},
\end{equation}
where we adopt the convention that the sum over an empty set is zero, i.e., $\sum_{k\in \emptyset} {a_k} = 0$.
 Therefore,  $h$ can be rewritten as 
\begin{equation}\label{eqn: h_new}
\begin{aligned}
h(x)=\sum_{\alpha\in {A}} c_{\alpha}\cdot   \exp(-\alpha \cdot   x),
\end{aligned}
\end{equation}
  where the coefficients $c_{\alpha}$ are given by
\begin{equation}\label{eqn: c_alpha}
c_{\alpha}=\sum_{K\in \mathcal{K}_{\alpha}}\Bigg(\Big(\sum_{k\in \{0,\ldots,n\}\setminus K} s_k\Big)\cdot  \Big(\prod_{k\in K}\exp(-b_k)\Big)\Bigg)
\end{equation}
with 
\[
\mathcal{K}_{\alpha}:=\{K\subseteq\{1,\ldots,n\}\ :\ \sum_{k\in K}a_k=\alpha\}.
\]
 We adopt the convention that $\prod_{k\in K}\exp(-b_k)=1$, if $K=\emptyset$.

From the definition of ${A}$ in (\ref{eqn: A}), we deduce that $\#{A} \leq 2^n$. Consequently, the set $\{\exp(-\alpha   x)\}_{\alpha \in {A}}$ forms a Chebyshev system with at most $2^n$  elements. Since the zero sets of $f$ and $h$ coincide, the fact that $f$ has $2^n$  distinct zeros implies that $h$ also has $2^n$ distinct zeros. This leads to the conclusion that $c_\alpha = 0$ for all $\alpha \in {A}$, and therefore $h \equiv 0$. From this, it follows that $f \equiv 0$. By applying Lemma \ref{key_lemma} and utilizing the linear independence of $1, \sigma(a_1\cdot x + b_1), \dots, \sigma(a_n\cdot x + b_n)$, along with $f \equiv 0$, we immediately conclude that $s_0 = s_1  =  \cdots = s_n = 0$.
\end{proof}
 
We will now introduce the definition of a {\em full spark frame} (see \cite{full}), which plays a crucial role in constructing the finite set of sampling points in Theorem \ref{th:gsamp}.

\begin{defi}
Let $\cV:=\{\vv_1,\ldots,\vv_N\}\subset \R^{d}$. We define $\cV$ to be a full spark frame if and only if every subset of $d$ vectors from $\cV$, denoted as $\{\vv_{j_1},\ldots,\vv_{j_d}\}$, spans the entire space $\R^d$, or equivalently,
\begin{equation*}
 \text{\rm span}\{\vv_{j_1},\ldots,\vv_{j_d}\} = \R^d.
 \end{equation*}
\end{defi}
 
Given positive integers $d$ and $N$ such that $d \leq N$, a variety of construction methods for full spark frames have been proposed in the literature (see \cite{full}). Notable among these are the constructions derived from structured matrices, such as the Vandermonde matrix, as well as those based on probabilistic approaches, exemplified by random Gaussian matrices.

 \begin{lem}\label{le:fullspark}
Let $\cV = \{\vv_1, \ldots, \vv_N\} \subseteq \mathbb{R}^{d}$ be a full spark frame. For any set of $M$ mutually distinct vectors $\{\va_1, \ldots, \va_M\}\subseteq\mathbb{R}^{d}$, if $N \geq \binom{M}{2}\cdot (d-1) + 1$, then there exists a vector $\vv_{j_0} \in \cV$ such that for all indices $k_1, k_2$  with $1 \leq k_1 < k_2 \leq M$, we have $\inner{\va_{k_1}, \vv_{j_0}} \neq \inner{\va_{k_2}, \vv_{j_0}}$.
 \end{lem}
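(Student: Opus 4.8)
The plan is a straightforward counting (pigeonhole) argument: for each pair of indices I identify the set of frame vectors that fail the separation requirement, bound its size using the full spark property, and then sum over pairs to see that these failure sets cannot exhaust $\cV$.

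First I would fix a pair $(k_1,k_2)$ with $1\le k_1<k_2\le M$ and set $\vw_{k_1,k_2}:=\va_{k_1}-\va_{k_2}$. Since the $\va_k$ are mutually distinct, $\vw_{k_1,k_2}\neq\boldsymbol{0}$, so $H_{k_1,k_2}:=\{\vx\in\R^d:\inner{\vw_{k_1,k_2},\vx}=0\}$ is a genuine hyperplane, i.e.\ a $(d-1)$-dimensional subspace. The key observation is that $\inner{\va_{k_1},\vv}=\inner{\va_{k_2},\vv}$ holds precisely when $\vv\in H_{k_1,k_2}$, so the ``bad'' frame vectors for this pair are exactly those in $\cV\cap H_{k_1,k_2}$. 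Next I would invoke the full spark property to bound $\#(\cV\cap H_{k_1,k_2})$: if $H_{k_1,k_2}$ contained $d$ distinct frame vectors $\vv_{j_1},\ldots,\vv_{j_d}$, then $\mathrm{span}\{\vv_{j_1},\ldots,\vv_{j_d}\}\subseteq H_{k_1,k_2}\subsetneq\R^d$, contradicting the definition of a full spark frame; hence $\#(\cV\cap H_{k_1,k_2})\le d-1$ for every pair.

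Finally I would take the union over all $\binom{M}{2}$ pairs. The set $B:=\bigcup_{1\le k_1<k_2\le M}(\cV\cap H_{k_1,k_2})$ of frame vectors that are bad for at least one pair satisfies $\#B\le\binom{M}{2}\cdot(d-1)$. Since $N=\#\cV\ge\binom{M}{2}\cdot(d-1)+1>\#B$, there exists some $\vv_{j_0}\in\cV\setminus B$, and by construction $\inner{\va_{k_1},\vv_{j_0}}\neq\inner{\va_{k_2},\vv_{j_0}}$ for all $1\le k_1<k_2\le M$, as required.

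There is essentially no serious obstacle here; the one point to state carefully is the claim that a hyperplane meets a full spark frame in at most $d-1$ vectors, which is immediate once one notes that a set of $d$ vectors spanning $\R^d$ cannot lie in any proper subspace. The rest is the union bound against the hypothesis $N\ge\binom{M}{2}(d-1)+1$.
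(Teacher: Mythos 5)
Your proof is correct and uses essentially the same argument as the paper: a union bound over the $\binom{M}{2}$ pairs combined with the observation that the full spark property forbids $d$ frame vectors from lying in the hyperplane $\{\vx : \inner{\va_{k_1}-\va_{k_2},\vx}=0\}$. The paper merely phrases the same counting as a proof by contradiction, so the two arguments are interchangeable.
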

 \begin{proof}
 We proceed by contradiction. For the sake of contradiction, assume that for any $\boldsymbol{v} \in \cV$, there exist indices $1 \leq k_1 < k_2 \leq m$ such that:
\[
\langle\boldsymbol{a}_{k_1}, \boldsymbol{v}\rangle = \langle\boldsymbol{a}_{k_2}, \boldsymbol{v}\rangle.
\]
For $1 \leq k_1 < k_2 \leq M$, define:
\[
I_{k_1,k_2} := \{\boldsymbol{v} \in \cV: \langle\boldsymbol{a}_{k_1}, \boldsymbol{v}\rangle = \langle\boldsymbol{a}_{k_2}, \boldsymbol{v}\rangle\}.
\]
Then we have:
\[
\bigcup_{1 \leq k_1< k_2 \leq M} I_{k_1,k_2} = \{\boldsymbol{v}_1, \ldots, \boldsymbol{v}_N\}.
\]
This leads to the claim that there exist indices $1 \leq k_1 < k_2 \leq M$ such that $\#I_{k_1,k_2} \geq d$. To prove this by contradiction, let us assume that $\#I_{k_1,k_2} \leq d-1$ for all pairs $(k_1,k_2)$ where $1 \leq k_1< k_2 \leq m$. Under this assumption, we would arrive at the following inequality:
\[
N = \#\{\boldsymbol{v}_1, \ldots, \boldsymbol{v}_N\} \leq \sum_{1 \leq k_1< k_2 \leq M} \#I_{k_1,k_2} \leq \binom{M}{2}\cdot (d-1).
\]
This inequality contradicts  $N \geq \binom{M}{2}(d-1) + 1$, thereby proving our claim.

Without loss of generality, we may assume that $\# I_{1,2}\geq d$ and $\{\boldsymbol{v}_1, \ldots, \boldsymbol{v}_d\} \subseteq I_{1,2}$. Then:
\[
\langle\boldsymbol{a}_{1}, \boldsymbol{v}_j\rangle = \langle\boldsymbol{a}_{2}, \boldsymbol{v}_j\rangle, \quad \text{for}\  j = 1, \ldots, d.
\]
According to ${\rm span}\{\vv_1,\ldots,\vv_d\}=\R^d$, we have  $\boldsymbol{a}_{1} = \boldsymbol{a}_{2}$, which contradicts our initial assumption that the vectors $\va_1,\ldots,\va_M$ are mutually distinct.
 \end{proof}

 
\begin{proof}[Proof of Theorem \ref{th:gsamp}]
Recall that 
\begin{equation}\label{eqn:2fn}
 f_{\mathcal{N}}(\vx)=\sum_{k=1}^m s_k\cdot  \sigma(\langle \va_k,\vx\rangle+b_k)+c\qquad \text{and}\qquad f_{\mathcal{N}'}(\vx)=\sum_{k=1}^m s_k'\cdot   \sigma(\langle \va_k',\vx\rangle+b'_k)+c'.
\end{equation}
Without loss of generality, we may assume that the first non-zero entry in each ${\va}_k$
  and ${\va}'_k$  is positive for all $k \in \{1, \ldots, m\}$. This assumption is justified by the identity $\sigma(x) + \sigma(-x) \equiv c_0$, which demonstrates that we can always transform the first negative entries into positive ones by adjusting the inputs of $\sigma$ and the constant terms.

Let us consider the finite point set $\mathcal{X} \subset \mathbb{R}^d$
constructed as follows:
\begin{enumerate}[(i)]
\item Set $N := \binom{4m}{2}\cdot (d-1)+1$.
Choose $\cV := \{\boldsymbol{v}_1, \ldots, \boldsymbol{v}_N\} \subseteq \mathbb{R}^d$
such that $\cV$ is a full spark frame in $\mathbb{R}^d$.
\item Select $2^{2m}$ distinct real numbers $z_1, z_2, \ldots, z_{2^{2m}} \in \mathbb{R}$.
\item For each $i \in\{ 1, \ldots, 2^{2m}\}$ and $j \in \{1, \ldots, N\}$, define: 
 \begin{equation}\label{eqn: x_kl}
  \boldsymbol{x}_{i,j}  := z_i\cdot   \boldsymbol{v}_j\in \mathbb{R}^d .
  \end{equation}
 \item Take $\mathcal{X} := \{  \boldsymbol{x}_{i,j} \}_{1 \leq i \leq 2^{2m},1\leq j\leq N} \subseteq \mathbb{R}^d$.
\end{enumerate}

We will proceed to demonstrate that, if
\begin{equation}\label{eqn: condition}
f_{\mathcal{N}}( \boldsymbol{x}_{i,j} ) = f_{\mathcal{N}'}( \boldsymbol{x}_{i,j} ),\quad  i=1,\ldots, 2^{2m}\text{ and } j=1,\ldots,N, 
\end{equation}
 then it follows that $\mathcal{N} \sim \mathcal{N}'$. 
 
 We assert  that the pairs $(\va_k,b_k), (\va'_k,b'_k)\in \mathbb{R}^{d}\times \mathbb{R}$, $k=1,\ldots,m$, satisfy 
 \begin{equation}\label{eq:jihedeng}
 \{(\va_k' ,b'_k)\}_{k=1}^m\,\,=\,\,\{(\va_k ,b_k)\}_{k=1}^m,
 \end{equation}
 and will provide the proof for this assertion later in our discussion.  Without loss of generality, assume that $(\va_k' ,b_k')=(\va_k ,b_k), k=1,\ldots,m$.
 Hence,   for any $\vx\in \R^d$, we have
 \begin{equation}\label{eq:FNN}
F(\vx):=F_{\mathcal N, \mathcal {N'}}(\vx):= f_{\mathcal{N}}(\vx)-f_{\mathcal{N}'}(\vx) =\sum_{k=1}^m (s_k-s_k') \cdot  \sigma(\langle \va_k,\vx\rangle+b_k)+c-c'.
 \end{equation}

Applying Lemma \ref{le:fullspark} to  the full spark frame $\cV$ in construction procedure (i) and the vector set $\{{\va}_1, \ldots, {\va}_m, -{\va}_1, \ldots, -{\va}_m, \boldsymbol{0}_d\}$ by removing duplicates, where $\boldsymbol{0}_d$ denotes the $d$-dimensional zero vector, we conclude that there exists a vector in $\cV$, let us call it ${\vv}_1$, such that:
\[
\begin{cases}
\langle  \va_{k_1},\vv_1 \rangle \neq \pm \langle  \va_{k_2},\vv_1 \rangle, & \text{for } \va_{k_1} \neq {\va}_{k_2}; \\
\langle  {\va}_k,\vv_1 \rangle \neq 0, & \text{for } k \in \{1, \ldots, m\}.
\end{cases}
\]
Combined with the fact that $ (\va_{k_1}, b_{k_1}) \neq \epsilon \cdot (\va_{k_2}, b_{k_2})$ for all $\epsilon \in \{-1,+1\}$ and all distinct indices $k_1, k_2$,  it leads to 
\begin{equation}\label{eqn: a_new}
\begin{cases}
(\inner{ \va_{k_1},\vv_1} , b_{k_1})\neq \pm (\inner{ \va_{k_2},\vv_1} , b_{k_2})\quad  & \text{for} \ k_1\neq k_2; \\
\langle  {\va}_k,\vv_1 \rangle \neq 0, & \text{for } k \in \{1, \ldots, m\}.
\end{cases}
\end{equation}
According to (\ref{eq:FNN}), for all $x\in \R$, we have
\[
F(x\cdot  \vv_1)=\sum_{k=1}^m (s_k-s_k')  \cdot  \sigma( \inner{ \va_k,\vv_1}\cdot  x+b_k)+c-c'.
\]
Based on (\ref{eqn: condition}), we observe that the function $F(x\cdot \vv_1)$,   with $x$ as its variable, has at least $2^m$ distinct zero points $z_i$, for $i = 1, \ldots, 2^m$. Coupled with (\ref{eqn: a_new}), this allows us to apply Lemma \ref{lem: sampling} to $F(x \cdot \vv_1)$, resulting in $c = c'$, and $s_k = s_k'$,  $k = 1, \ldots, m$. Consequently, we conclude that $\mathcal{N} \sim \mathcal{N}'$.

It remains to prove (\ref{eq:jihedeng}). We begin by showing that $ \{(\va_k' ,b'_k)\}_{k=1}^m \subseteq \{(\va_k ,b_k)\}_{k=1}^m$. For the sake of contradiction, assume that $ \{(\va_k' ,b'_k)\}_{k=1}^m \nsubseteq \{(\va_k ,b_k)\}_{k=1}^m$. Without loss of generality, we may assume that $(\va_1' ,b'_1)\notin \{(\va_k ,b_k)\}_{k=1}^m$. Here we also have $(\va_1' ,b'_1)\notin \{(-\va_k ,-b_k)\}_{k=1}^m$ as the first non-zero entry in each ${\va}_k$
  and ${\va}'_k$  is positive for all $k \in \{1, \ldots, m\}$.
According to Lemma \ref{le:fullspark}, employing a similar argument to the vector set (after removing the duplicates)
\[
\{{\va}_1, \ldots, {\va}_m, -{\va}_1, \ldots, -{\va}_m,  {\va}'_2, \ldots, {\va}'_m, -{\va}'_2, \ldots, -{\va}'_m, \va'_1,\boldsymbol{0}_d\},
\] 
there exists a vector in $\cV$, say ${\vv}_1$, such that:
\begin{equation}\label{eq:xishu}
\begin{cases}
\inner{\va_1',\vv_1} \neq  0, \\
(\inner{\va_1',\vv_1},b'_1)\neq \pm (\inner{\va_k,\vv_1},b_k), \quad &k=1,\ldots,m; \\
(\inner{\va_1',\vv_1},b'_1)\neq \pm (\inner{\va_k',\vv_1'},b'_k), \quad &k=2,\ldots,m .
\end{cases}
\end{equation}
Observe that for all $x\in \R$,
\[
F(x\cdot  \vv_1)=\sum_{k=1}^m s_k  \cdot \sigma(\inner{ \va_k,\vv_1} \cdot x+b_k)-\sum_{k=1}^m s_k' \cdot   \sigma(\inner{ \va'_k,\vv_1}\cdot x+b'_k)+c-c'.
\]
Based on (\ref{eqn: condition}), we establish that the function $F(x\cdot   \vv_1)$,  with $x$ as its variable, possesses at least $2^{2m}$ distinct zero points $z_i$, for $i=1,\ldots,2^{2m}$. Combined with (\ref{eq:xishu}), this enables use to apply Lemma \ref{lem: sampling} to deduce that $s_1'=0$, which contradicts the irreducibility of $f_{\mathcal{N}'}$. Consequently, we conclude that $ \{(\va_k' ,b'_k)\}_{k=1}^m \subseteq \{(\va_k ,b_k)\}_{k=1}^m$. 

Through an analogous argument, we can demonstrate that $ \{(\va_k ,b_k)\}_{k=1}^m \subseteq \{(\va_k' ,b_k')\}_{k=1}^m$. Therefore, we have established the validity of (\ref{eq:jihedeng}).

\end{proof}

\end{document}